\documentclass[letterpaper,10pt,conference]{ieeeconf}
\IEEEoverridecommandlockouts

\usepackage[T1]{fontenc}
\usepackage{tgtermes}

\usepackage{amsmath}
\usepackage{amsfonts}
\usepackage{amssymb}
\usepackage{cases}
\usepackage{dsfont}

\usepackage{graphicx}
\usepackage{subcaption}
\usepackage{tikz}

\usepackage{array}
\usepackage{booktabs}
\usepackage{multirow}
\usepackage{makecell}
\usepackage{tablefootnote}
\usepackage{threeparttable}

\usepackage[linesnumbered,ruled]{algorithm2e}

\usepackage{pifont}
\usepackage{bbding}
\usepackage{textcomp}
\usepackage{romannum}

\usepackage{changepage}
\usepackage{stfloats}
\usepackage{verbatim}
\usepackage{url}
\usepackage{xcolor}
\usepackage{cite}

\definecolor{pink}{rgb}{0.58,0,0.83}
\definecolor{orange}{rgb}{1,0.5,0}
\definecolor{lightgreen}{rgb}{0.2,0.8,0.2}
\definecolor{lightyellow}{rgb}{0.84,0.65,0.13}

\newcommand{\cmark}{\ding{51}}
\newcommand{\xmark}{\ding{55}}

\usepackage[
    colorlinks,
    linkcolor=blue,
    anchorcolor=blue,
    citecolor=red
]{hyperref}

\usepackage[capitalise]{cleveref}

\crefname{figure}{Fig.}{Figs.}
\Crefname{figure}{Fig.}{Figs.}

\crefname{subfigure}{Fig.}{Figs.}
\Crefname{subfigure}{Fig.}{Figs.}

\crefrangelabelformat{subfigure}{%
  #3#1#4--#5(\crefstripprefix{#1}{#2}#6%
}

\newcounter{definition}
\renewcommand{\thedefinition}{\arabic{definition}}
\newenvironment{definition}[1][]{%
  \refstepcounter{definition}%
  \par\smallskip\noindent
  \textbf{Definition~\thedefinition%
    \if\relax\detokenize{#1}\relax\else\ (\normalfont #1)\fi.}\normalfont
}{\par}

\newcounter{problem}
\renewcommand{\theproblem}{\arabic{problem}}
\newenvironment{problem}[1][]{%
  \refstepcounter{problem}%
  \par\smallskip\noindent
  \textbf{Problem~\theproblem%
    \if\relax\detokenize{#1}\relax\else\ (\normalfont #1)\fi.~}\normalfont
}{\par}

\newcounter{assumption}
\renewcommand{\theassumption}{\arabic{assumption}}
\newenvironment{assumption}[1][]{%
  \refstepcounter{assumption}%
  \par\smallskip\noindent
  \textbf{Assumption~\theassumption%
    \if\relax\detokenize{#1}\relax\else\ (\normalfont #1)\fi.~}\normalfont
}{\par}

\newcounter{lemma}
\renewcommand{\thelemma}{\arabic{lemma}}
\newenvironment{lemma}[1][]{%
  \refstepcounter{lemma}%
  \par\smallskip\noindent
  \textbf{Lemma~\thelemma%
    \if\relax\detokenize{#1}\relax\else\ (\normalfont #1)\fi.~}\itshape
}{\par}

\newcounter{corollary}
\renewcommand{\thecorollary}{\arabic{corollary}}
\newenvironment{corollary}[1][]{%
  \refstepcounter{corollary}%
  \par\smallskip\noindent
  \textbf{Corollary~\thecorollary%
    \if\relax\detokenize{#1}\relax\else\ (\normalfont #1)\fi.~}\itshape
}{\par}

\newcounter{theorem}
\renewcommand{\thetheorem}{\arabic{theorem}}
\newenvironment{theorem}[1][]{%
  \refstepcounter{theorem}%
  \par\smallskip\noindent
  \textbf{Theorem~\thetheorem%
    \if\relax\detokenize{#1}\relax\else\ (\normalfont #1)\fi.~}\itshape
}{\par}

\newcounter{property}
\renewcommand{\theproperty}{\arabic{property}}

\newcounter{remark}
\renewcommand{\theremark}{\arabic{remark}}
\newenvironment{remark}[1][]{%
  \refstepcounter{remark}%
  \par\smallskip\noindent
  \textit{Remark~\theremark%
    \if\relax\detokenize{#1}\relax\else\ (\normalfont #1)\fi.~}\normalfont
}{\par}

\renewenvironment{proof}{%
  \par\smallskip\noindent
  \textit{Proof:}\enspace
}{%
  \hfill$\square$\par
}

\usepackage{arydshln}
\usepackage{multicol}
\usepackage{soul}
\usepackage{tablefootnote} 
\definecolor{ForestGreen}{RGB}{34,139,34}
\usepackage{balance}
\begin{document}
	\title{\bf Decoupling Physical Speed from Path Parameterization in Singularity-Free Guiding Vector Fields}
\author{
Zhouru Xiao, Sha Luo, Yang Lu, Mingliang Xiao, Weijia Yao, Bohuan Lin, Xianzhe Cheng, and Yaonan Wang
\thanks{Zhouru Xiao, Mingliang Xiao, Weijia Yao, and Yaonan Wang are with the School of Artificial Intelligence and Robotics, Hunan University, Changsha, China. Sha Luo is with the College of Information Science and Engineering, Hunan Normal University, Changsha, China. Yang Lu is with the College of Systems Engineering and the State Key Laboratory of Digital Intelligent Modeling and Simulation, National University of Defense Technology, Changsha, China. Bohuan Lin is with Xi'an Jiaotong-Liverpool University, Suzhou, China. Xianzhe Cheng is with Information Support Force Engineering University, Xi'an, China. (e-mail: xzr798@hnu.edu.cn; luosha@hunnu.edu.cn; luyang18@mail.sdu.edu.cn; Mingliangxiao@hnu.edu.cn; wjyao@hnu.edu.cn; bohuan.lin@xjtlu.edu.cn; chengxianzhe22@isfeu.edu.cn; yaonan@hnu.edu.cn). \textit{(Corresponding author: Sha Luo).}}
}
	
	\maketitle
	\thispagestyle{empty}
	\pagestyle{empty}
	
	\begin{abstract}
The existing singularity-free guiding vector field (SF-GVF) with an additional virtual coordinate can eliminate singular points (i.e., points where the vector field vanishes) inherent in conventional GVFs and guarantee global convergence of robot trajectories to closed and self-intersecting desired paths. However, the desired speed given by the GVF along the desired path in the original lower-dimensional space cannot be arbitrarily specified but depends on path parameterizations. One possible workaround is to partially normalize the physical projection of the SF-GVF and assign a user-designed speed. However, we show that this workaround may introduce new singularities since the normalization denominator can become zero. To address this issue, we propose a new SF-GVF with prescribed physical speed (PPS). The integral curves of the new SF-GVF converge \textit{exponentially} to the desired path from any initial condition in the higher-dimensional space (including virtual dimension); more importantly, the robot's physical speed converges to the PPS, while the path-error dynamics remain \textit{invariant} under regular reparameterizations of the desired path. We further develop a saturated acceleration control law for second-order kinematic models. Finally, comparative simulations and 3D path-following experiments with a quadrotor under different PPS profiles validate the theoretical results and demonstrate the effectiveness of the proposed approach. Additional experimental results and videos are available at: {\color{blue}\url{https://anonymous-798.github.io/SF-GVF_with_PPS/}}.
	\end{abstract}
	
	
	\definecolor{limegreen}{rgb}{0.2, 0.8, 0.2}
	\definecolor{forestgreen}{rgb}{0.13, 0.55, 0.13}
	\definecolor{greenhtml}{rgb}{0.0, 0.5, 0.0}
	
\section{Introduction}
Path following is a fundamental task in mobile robot navigation, with applications in UAVs \cite{sujit2014survey,rubi2020survey}, marine vehicles \cite{fossen2003los}, and other robotic systems \cite{samson1995chained,aguiar2007trajectory}. Unlike trajectory tracking, where a robot is required to track a time-dependent reference point, the desired path in a path-following problem is usually given as a geometric curve \textit{without temporal information} \cite{aguiar2007trajectory,aguiar2008limitations}. The robot is then guided to converge to the desired path from a given initial point and move along it. Different approaches have been developed for path following, including projection-based methods \cite{samson1995chained,aguiar2007trajectory}, line-of-sight guidance \cite{fossen2003los}, and vector field (VF) methods \cite{nelson2007vector,michalek2010vfo,yao2020path3d}. 

Among different path-following methods, those using a \textit{guiding vector field (GVF)} have been widely studied due to their intuitive formulation, computational simplicity, and convenient integration with robot controllers \cite{kapitanyuk2018gvf,goncalves2010vector,nelson2007vector,michalek2010vfo,yao2020path3d}. The basic idea is to design a vector field whose integral curves converge to the desired path, so that the robot can be guided to approach and move along the path asymptotically \cite{kapitanyuk2018gvf,goncalves2010vector,yao2020path3d}. Based on this idea, path-following controllers have been developed for nonholonomic mobile robots (e.g., \cite{goncalves2010vector,michalek2010vfo}) and fixed-wing aircraft (e.g., \cite{nelson2007vector,lawrence2008lyapunov,rezende2018robust,yao2018robotic3d}). It has also been reported in \cite{sujit2014survey,caharija2015comparison} that VF methods can achieve good path-following accuracy with relatively low control effort. More recently, GVF methods have been extended to distributed cooperative path following of multi-robot systems \cite{hu2023spontaneous,yao2023distributed}, as well as to inverse-kinematics control of mobile robots \cite{zhou2025inverse}.

\begin{figure}[!t]
    \centering
    \subfloat[]{
        \includegraphics[width=0.43\linewidth]{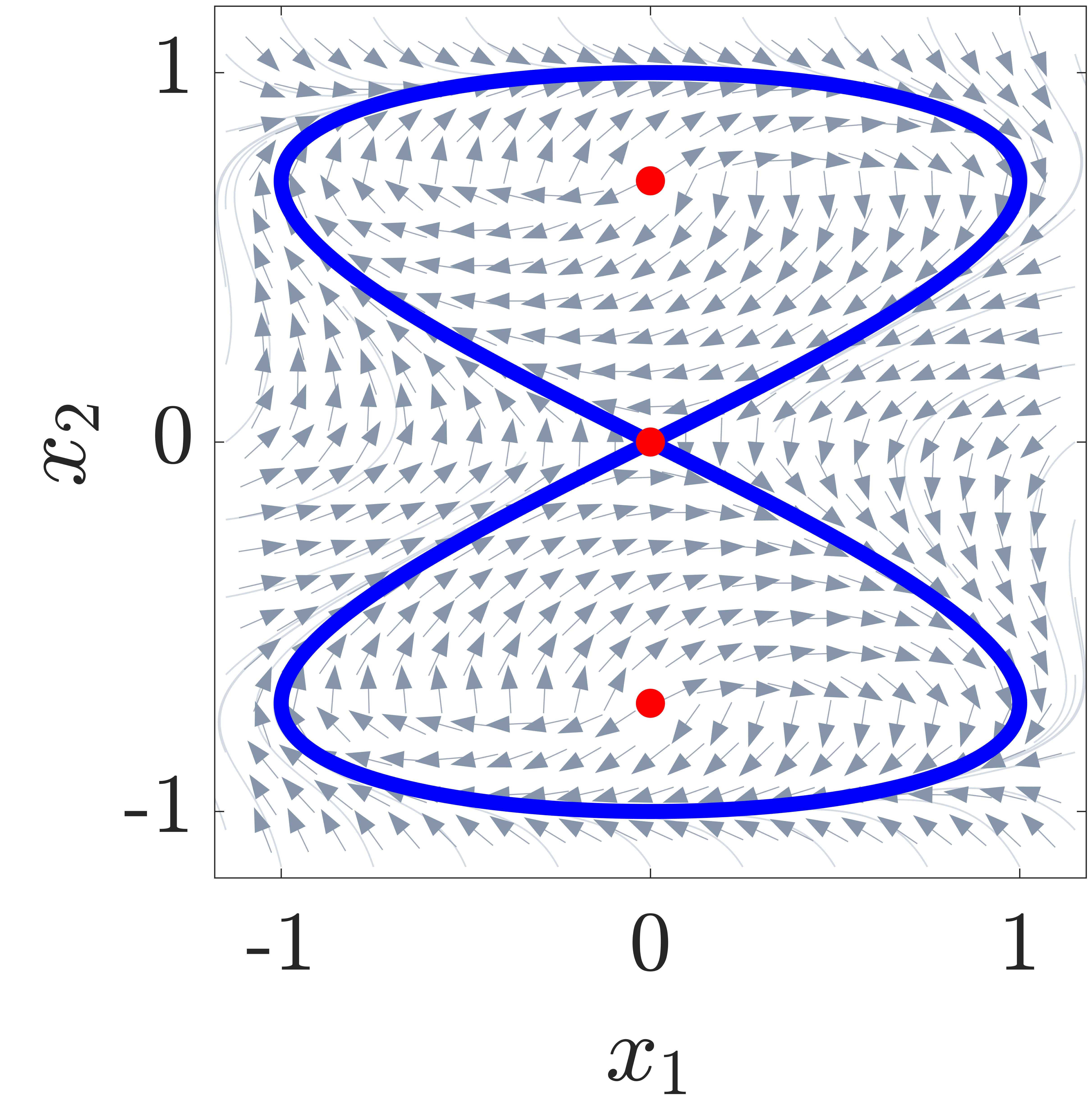}
        \label{fig:001a}
    }
    \subfloat[]{
        \includegraphics[width=0.43\linewidth]{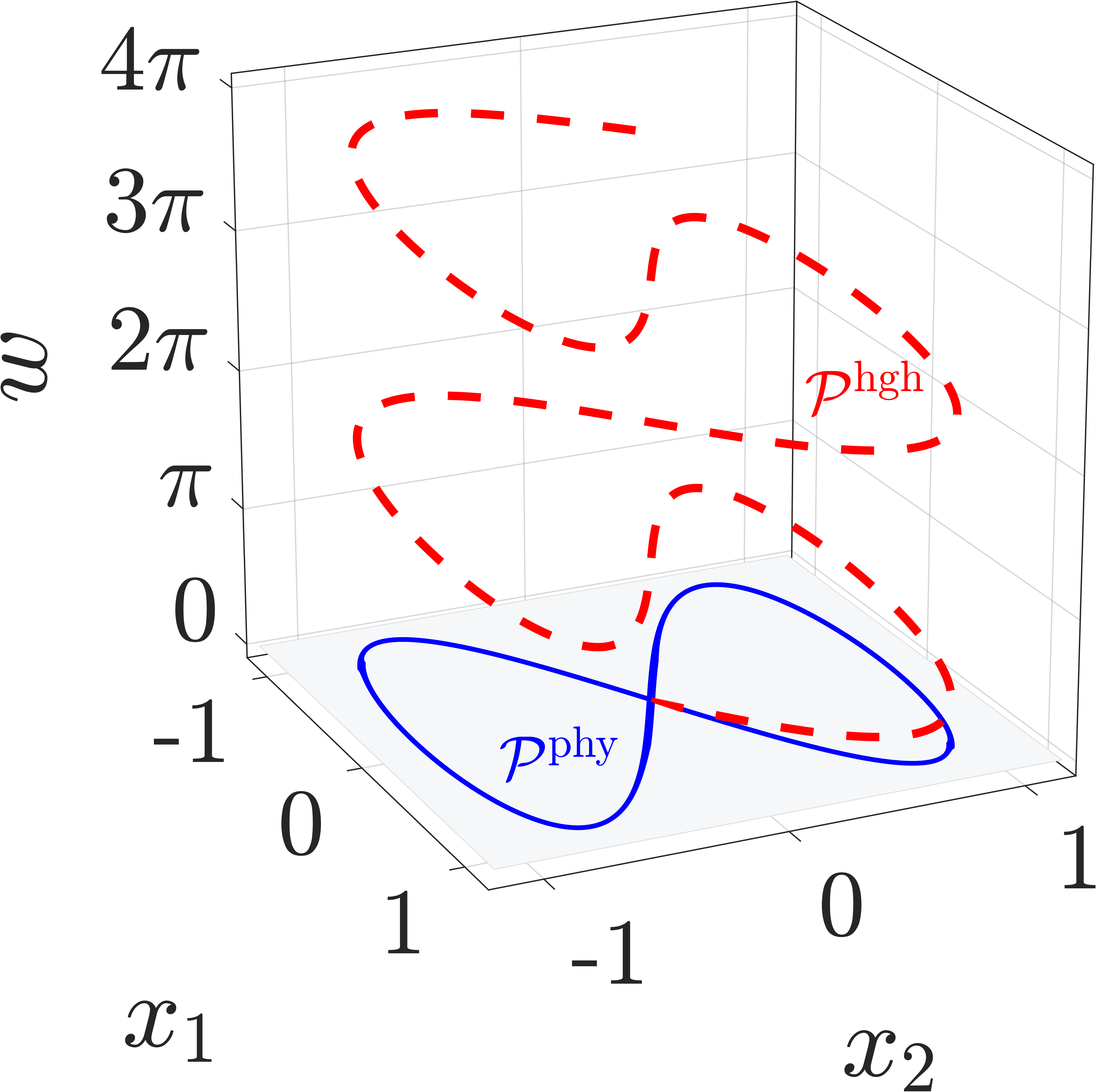}
        \label{fig:001b}
    }
    \caption{(a) The normalized vector field \cite{kapitanyuk2018gvf} for a figure “$\infty$” path described by $\phi(x_1,x_2)=x_1^2-4x_2^2+4x_2^4=0$. The three red points are the singular points (i.e., points where the vector field vanishes). (b) The 2D figure “$\infty$” path is “stretched” along the $w$-axis in the higher-dimensional space to form the higher-dimensional path.}
    \label{fig:001}
\end{figure}

However, a significant difficulty in the analysis and application of GVF methods arises when there are singular points at which the vector field becomes zero. In this case, the guidance direction becomes undefined when the robot reaches these points, and global convergence to the desired path cannot be guaranteed (e.g., \cite{kapitanyuk2018gvf,goncalves2010vector,yao2018robotic3d,yao2020path3d}). This problem is especially relevant for closed and self-intersecting paths. To address this problem, the work in \cite{yao2021sfgvf} treated the \textit{path parameter} as an additional virtual coordinate and “stretched” a closed or self-intersecting path in the physical space into an unbounded and non-self-intersecting path in a higher-dimensional space (see Fig.~\ref{fig:001}). An SF-GVF is then constructed in the higher-dimensional space, which enables global path following of the desired path. However, while this construction resolves the singularity issue associated with geometric path following, it introduces a new issue: the magnitude of the vector field (aka, desired speed) depends on the chosen path parameterization. In particular, when the robot moves on the desired path following the direction and speed given by the vector field, the SF-GVF can only regulate the speed of the path parameter (i.e., the virtual coordinate), whereas the physical speed is determined by the norm of the tangent vector of the parameterized path and cannot be independently controlled. 

As a result, different parameterizations of the same geometric path may lead to different physical speeds. In other words, the SF-GVF guarantees path following in the geometric sense, but the actual speed along the path remains \textit{coupled} with the path parameterization. For a robot with bounded acceleration, this dependence can be detrimental: an excessively large physical speed in a high-curvature segment may require a normal acceleration beyond the actuator limit, resulting in acceleration saturation and degraded path-following accuracy. To remove this dependence, partially normalizing the physical projection of the SF-GVF was suggested in \cite{yao2021sfgvf}. However, since the norm of the physical projection appears in the denominator and may vanish, the resulting partially normalized vector field is not guaranteed to be globally well-defined over the higher-dimensional space (see Section~\ref{sec:002}).


\textbf{Contributions:} To address the issue mentioned above, we propose a new SF-GVF for robot path following and design a corresponding saturated controller for second-order kinematic models with acceleration constraints. In particular, we redesign the converging and propagation terms and introduce a \textbf{P}rescribed \textbf{P}hysical \textbf{S}peed (PPS) to decouple the physical speed from the path parameterization. There are many appealing features of our approach:  1) The new GVF is globally well-defined and has no off-path singular points; for a positive PPS, it is singularity-free; 2) It guarantees that the path-following and speed-regulation control objectives are achieved globally regardless of the robot's initial position;  3) For different parameterizations of the same geometric path, the path-error dynamics of the vector field remain identical. The approach has been verified by high-speed flight experiments with a quadrotor. We further design a curvature-aware PPS for path-traversal tasks to mitigate path-following degradation caused by excessive acceleration demand.


\section{Preliminaries}\label{sec:002}

The SF-GVF with an additional path-parameter coordinate guarantees global convergence to the desired path and enables self-intersecting path following \cite{yao2021sfgvf}. Suppose an $n$-D physical desired path $\mathcal{P}^{\mathrm{phy}}$ is parameterized by $\boldsymbol{x}= \boldsymbol{f}(w)$, where $\boldsymbol{x}=\begin{bmatrix} x_1, x_2,\dots, x_n \end{bmatrix}^\top$ is the n-dimensional coordinate, $\boldsymbol{f}(w)=\begin{bmatrix} f_1(w), f_2(w),\dots, f_n(w) \end{bmatrix}^\top$, $w\in\mathbb{R}$ is the path parameter, and $\boldsymbol{f}\in C^2$. For $i=1,2,\dots,n$, define $\phi_i(\boldsymbol{\xi})=x_i - f_i(w)$, where $\boldsymbol{\xi}=[\boldsymbol{x}^\top,w]^\top\in \mathbb{R}^{n+1}$ is the higher-dimensional coordinate. The corresponding higher-dimensional desired path is the intersection of the $n$ hypersurfaces $\phi_i(\boldsymbol{\xi})=0$ \cite{goncalves2010vector,yao2018robotic3d,hu2023spontaneous}; that is, $\mathcal{P}^{\mathrm{hgh}}=\left\{\boldsymbol{\xi}\in\mathbb{R}^{n+1}\mid \phi_i(\boldsymbol{\xi})= 0,i=1,2,\dots,n\right\}$. Its projection onto the physical coordinates is exactly $\mathcal{P}^{\mathrm{phy}}$ (see Fig.~\ref{fig:001b}). Thus, a higher-dimensional GVF corresponding to $\mathcal{P}^{\mathrm{hgh}}$ can be projected onto the $n$-D physical space to guide the robot along $\mathcal{P}^{\mathrm{phy}}$. In \cite{yao2021sfgvf}, the SF-GVF is defined as
\begin{equation}\label{eq:high_vector_field}
\boldsymbol{\chi}^{\mathrm{hgh}}(\boldsymbol{\xi})=\times
\begin{pmatrix}
\nabla\phi_1,\ldots,\nabla\phi_n 
\end{pmatrix}-\sum_{i=1}^nk_i\phi_i\nabla\phi_i,
\end{equation}
where $k_i > 0$, and $\nabla\phi_i=\begin{bmatrix}0,\dots,1,\dots,-f_i^{\prime}(w)\end{bmatrix}^\top \in \mathbb{R}^{n+1}$ is the gradient of $\phi_i$ with respect to the higher-dimensional coordinate $\boldsymbol{\xi}$, with 1 being its $i$-th component and $f_i^{\prime}(w)=\mathrm{d}f_i(w)/\mathrm{d}w$. The operator $\times(\cdot)$ denotes the generalized cross product \cite[Ch.~7.2]{galbis2012vector}. Hence,
\begin{equation}\label{eq:ger_cross_product}
\times(\nabla\phi_1,\ldots,\nabla\phi_n)=(-1)^n
\begin{bmatrix}
\boldsymbol{f}^{\prime}(w) \\
1
\end{bmatrix}\in\mathbb{R}^{n+1}.
\end{equation}
For convenience, denote
$\nabla_{\times}\boldsymbol{\phi}
:=\times(\nabla\phi_1,\ldots,\nabla\phi_n)$. The physical interpretation of the vector field $\boldsymbol{\chi}^{\mathrm{hgh}}$ is clear. The first term $\nabla_{\times}\boldsymbol{\phi}$ is orthogonal to all the gradient vectors \cite[Proposition~7.2.1]{galbis2012vector} and thus provides a propagation direction along the desired path. The second term $-\sum_{i=1}^{n}k_i\phi_i\nabla\phi_i$ is a weighted sum of the gradient vectors and guides the robot to converge to the desired path. It is worth noting that the $(n+1)$-th component of $\nabla_{\times}\boldsymbol{\phi}$ is the constant $(-1)^n$, which is independent of the particular parameterization of the desired path. Therefore, $\|\nabla_\times\phi(\boldsymbol{\xi})\|\neq0$ for all $\boldsymbol{\xi}\in\mathbb{R}^{n+1}$. Together with the orthogonality between the propagation and converging terms, this implies that $\boldsymbol{\chi}^{\mathrm{hgh}}(\boldsymbol{\xi}) \neq \boldsymbol{0}, \forall \boldsymbol{\xi} \in \mathbb{R}^{n+1}$, and hence there are no singular points in the higher-dimensional space $\mathbb{R}^{n+1}$. We rewrite \eqref{eq:high_vector_field} in the following compact form:
\begin{equation}\label{eq:form_compact}
\boldsymbol{\chi}^{\mathrm{hgh}}(\boldsymbol{\xi}) = \begin{bmatrix}
\boldsymbol{\chi}_p(\boldsymbol{\xi}) \\ \chi_w(\boldsymbol{\xi})
\end{bmatrix} = \begin{bmatrix} (-1)^n\boldsymbol{f}^{\prime}(w)-\boldsymbol{K\phi} \\
(-1)^n+\boldsymbol{f}^{\prime}(w)^\top\boldsymbol{K\phi} \end{bmatrix},
\end{equation}
where $\boldsymbol{K}=\operatorname{diag}(k_1,\ldots,k_n)\in \mathbb{R}^{n \times n}$ is a diagonal matrix and $\boldsymbol{\phi} = \begin{bmatrix}\phi_1,\phi_2,\dots,\phi_n\end{bmatrix}^\top$. Consider the autonomous system 
\begin{equation}\label{eq:ode}
\dot{\boldsymbol{\xi}}=\begin{bmatrix}
\dot{\boldsymbol{x}} \\ \dot{w}
\end{bmatrix} = \begin{bmatrix}
\boldsymbol{\chi}_p(\boldsymbol{\xi}) \\ \chi_w(\boldsymbol{\xi})
\end{bmatrix}=\boldsymbol{\chi}^{\mathrm{hgh}}(\boldsymbol{\xi}).
\end{equation}
When $\|\boldsymbol{\phi}\| = 0$, one has $\boldsymbol{\xi} \in \mathcal{P}^{\mathrm{hgh}}$ and consequently $\boldsymbol{x}\in \mathcal{P}^{\mathrm{phy}}$. In this case, the speed of the path parameter is $\dot{w} = (-1)^n$, while the physical coordinates satisfy $\|\dot{\boldsymbol{x}}\|=\|(-1)^n\boldsymbol{f}^{\prime}(w)\|=\|\boldsymbol{f}^{\prime}(w)\|$. Therefore, the physical speed depends on the chosen path parameterization and cannot be specified independently. To overcome the aforementioned dependence of the physical speed on the path parameterization, a natural idea is to partially normalize the higher-dimensional GVF with respect to its physical projection; that is,
\begin{equation}\label{eq:partone}
\widehat{\boldsymbol{\chi}}^\mathrm{hgh}(\boldsymbol{\xi})=\frac{\boldsymbol{\chi}^\mathrm{hgh}(\boldsymbol{\xi})}{\|\boldsymbol{\chi}_p(\boldsymbol{\xi})\|}.
\end{equation}
This idea is also mentioned in \cite[Sec.~\Romannum{8}]{yao2021sfgvf}. However, the denominator $\|\boldsymbol{\chi}_p(\xi)\|$ may be zero and \eqref{eq:partone} is not well defined in the normalization. In fact, the vector field $\widehat{\boldsymbol{\chi}}^\mathrm{hgh}$ is not well-defined on the set $\mathcal{Z}=\left\{\boldsymbol{\xi}\in\mathbb{R}^{n+1}\mid \|\boldsymbol{\chi}_p(\boldsymbol{\xi})\|=0\right\}$, called the \emph{zero set}. From \eqref{eq:form_compact}, the zero set is calculated as follows:
\begin{equation*}
\mathcal{Z}=\left\{\boldsymbol{\xi}\in \mathbb{R}^{n+1}\mid\boldsymbol{x}=\boldsymbol{f}(w)+(-1)^n\boldsymbol{K}^{-1}\boldsymbol{f}^{\prime}(w)\right\}.
\end{equation*}
For every $w\in\mathbb{R}$, there exists a unique physical coordinate $\boldsymbol{x}\in\mathbb{R}^{n}$ such that the corresponding higher-dimensional coordinate belongs to $\mathcal{Z}$; hence, $\mathcal Z$ is always nonempty. This implies an inherent limitation: the partial-normalization approach cannot define a globally well-defined vector field on the higher-dimensional space, no matter how one chooses the path parametrization. Two examples are shown in Fig.~\ref{fig:003}.

\begin{figure}[!t]
    \centering
    \subfloat[]{
        \includegraphics[width=0.43\linewidth]{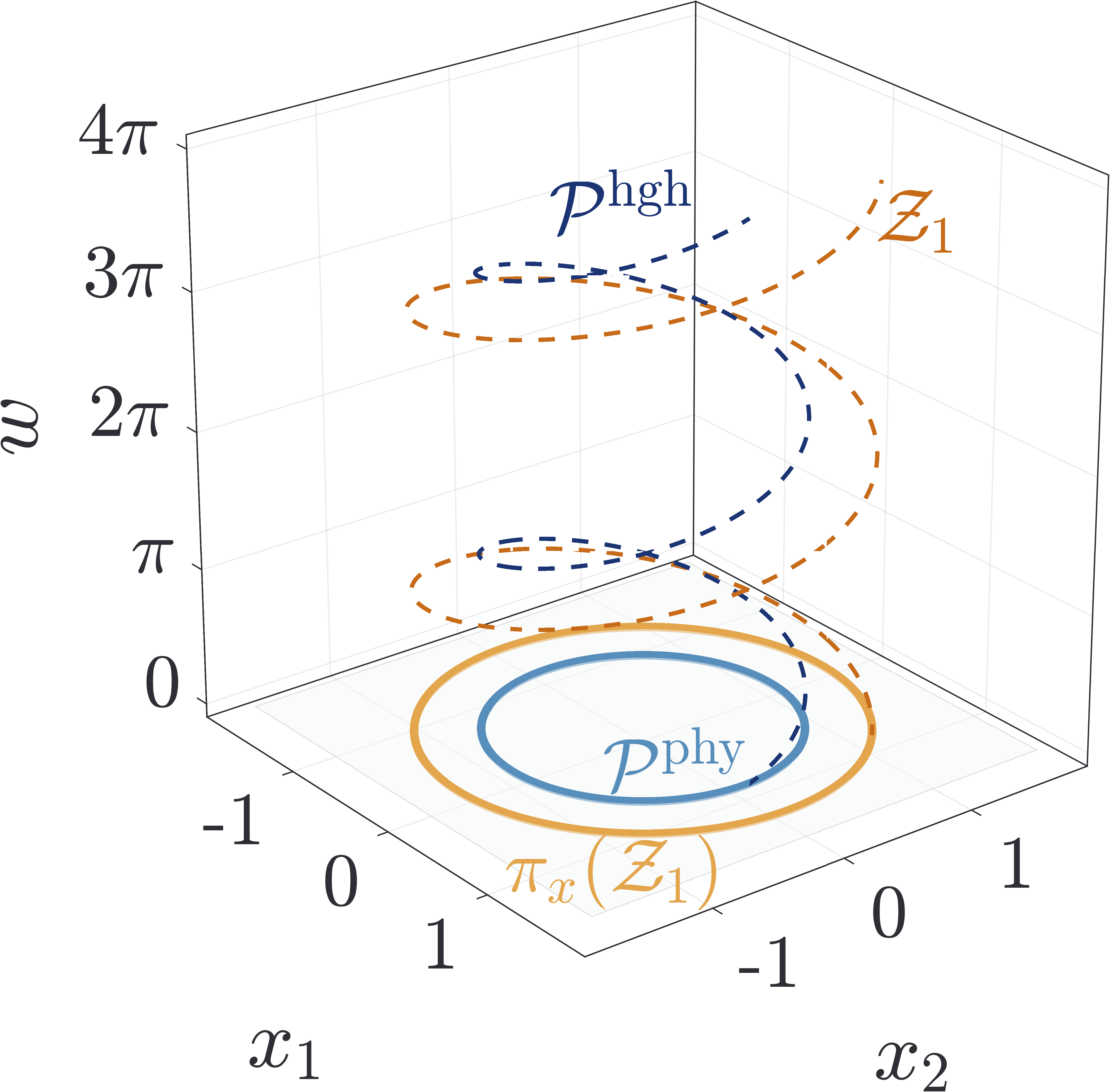}
        \label{fig:003a}
    }
    \subfloat[]{
        \includegraphics[width=0.43\linewidth]{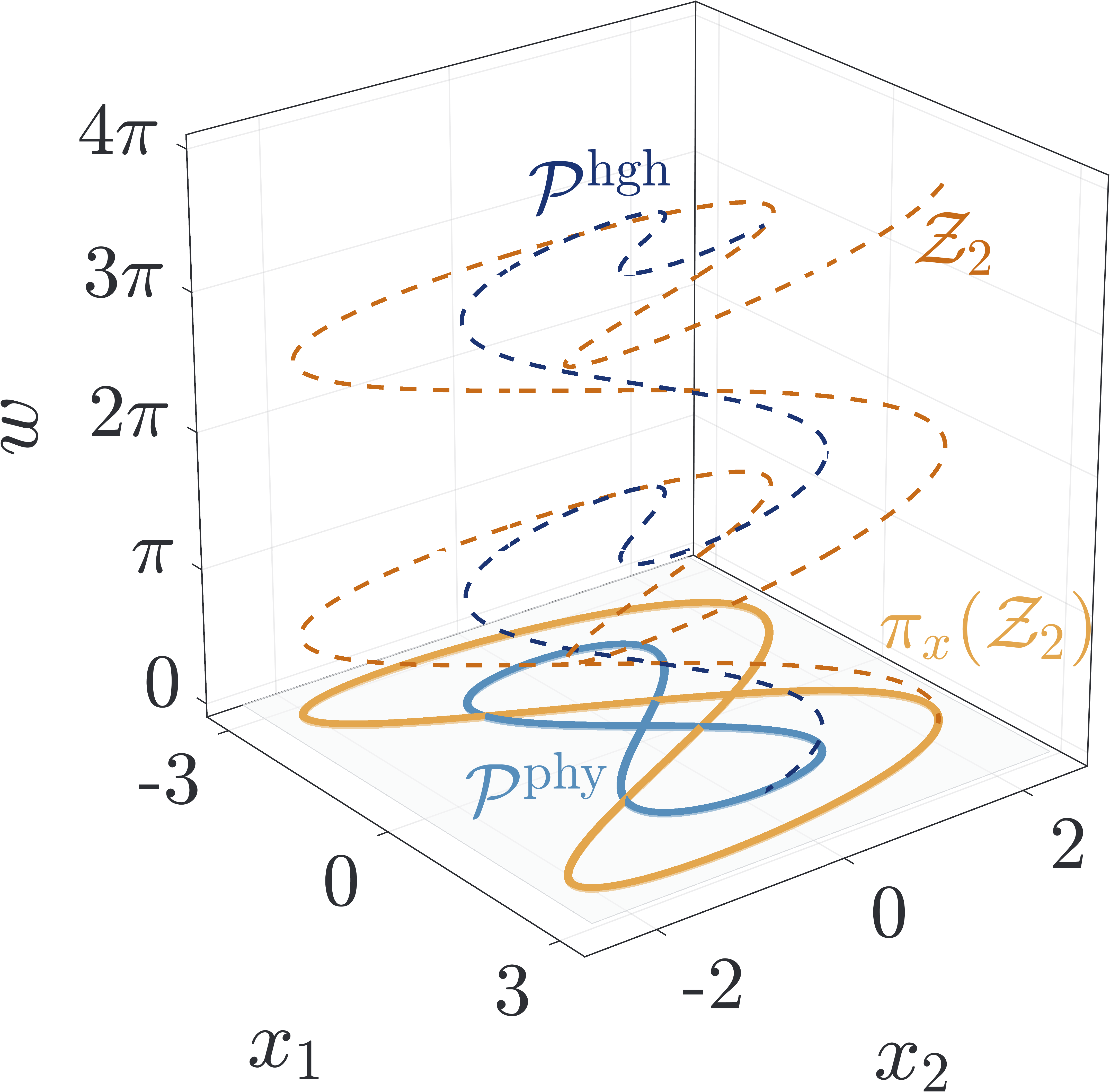}
        \label{fig:003b}
    }
    \caption{In both examples, $n=2$ and $\boldsymbol{K}=\boldsymbol{I}_2$.  (a) $\boldsymbol{f}(w)=[\cos w,\sin w]^\top$, with $\mathcal{Z}_1=\{[\sqrt{2}\cos\left(w+\frac{\pi}{4}\right) ,\sqrt{2}\sin\left(w+\frac{\pi}{4}\right) ,w]^\top \mid w\in\mathbb{R}\}$. (b) $\boldsymbol{f}(w)=[2\cos w,\sin 2w]^\top$, with $\mathcal{Z}_2=\{[2\sqrt{2}\cos\left(w+\frac{\pi}{4}\right) ,\sqrt{5}\sin\left(2w+\arctan2\right) ,w]^\top \mid w\in\mathbb{R}\}$. Orange dashed and solid curves denote the zero set $\mathcal{Z}_i$ and its projection $\pi_x(\mathcal{Z}_i)$ onto the plane $w=0$, respectively.}
    \label{fig:003}
\end{figure}


Since the partially normalized vector field is defined only on $\mathbb{R}^{n+1}\setminus\mathcal{Z}$, the differential equation
$\dot{\boldsymbol{\xi}}=\widehat{\boldsymbol{\chi}}^\mathrm{hgh}(\boldsymbol{\xi})$
is not well-defined when the initial condition satisfies $\boldsymbol{\xi}(0)\in\mathcal{Z}$. Moreover, as $\boldsymbol{\xi}(t)$ approaches $\mathcal{Z}$, one has $\|\boldsymbol{\chi}_p(\boldsymbol{\xi})\|\to0$, and hence $\|\widehat{\boldsymbol{\chi}}^\mathrm{hgh}(\boldsymbol{\xi})\|$ becomes unbounded. Therefore, this approach cannot guarantee a globally well-defined solution for arbitrary initial conditions, and consequently cannot simultaneously guarantee global convergence to the desired path and physical-speed regulation. Based on the above discussion, we focus on the following question:

\textbf{Question:} Is it possible to regulate the physical speed of the robot independently of the path parameterization, while retaining the global well-definedness and global convergence properties?


\section{Design of Singularity-Free Guiding Vector Field with Prescribed Physical Speed}\label{sec:003}

In this section, we formulate the vector-field guided path-following (VF-PF) problem with prescribed physical speed (PPS) and construct the corresponding SF-GVF. We first formally define the PPS and the VF-PF problem as follows.

\begin{definition}[Prescribed physical speed]\label{def:001}
    A PPS is a continuously differentiable function $v_d:\mathbb{R}\to\mathbb{R}_{\geq0}$ satisfying $0\leq v_d(w)\leq v_+,|v_d^\prime(w)|\leq L_v,\forall w\in\mathbb{R}$ for some constants $v_+>0$ and $L_v \geq 0$. The value $v_d(w)$ specifies the desired physical speed of the robot at the path parameter $w$. Here, $w(t)$ is a virtual state and $w(0)$ can be initialized arbitrarily even when \(\boldsymbol{x}(0)\notin\mathcal P^{\mathrm{phy}}\).
\end{definition}

\begin{problem}[VF-PF problem with PPS] Given a physical desired path $\mathcal{P}^{\mathrm{phy}}=\left\{\boldsymbol{x}\in\mathbb{R}^n\mid \boldsymbol{x} = \boldsymbol{f}(w),w\in\mathbb{R}\right\}$, its corresponding higher-dimensional desired path $\mathcal{P}^{\mathrm{hgh}} = \{\boldsymbol{\xi} \in \mathbb{R}^{n+1} \mid \boldsymbol{\phi}(\boldsymbol{\xi}) = \mathbf{0},\boldsymbol{\phi} = \boldsymbol{x} - \boldsymbol{f}(w)\}$, and a PPS
$v_d(w)$, the VF-PF problem with PPS is to design a continuously differentiable higher-dimensional GVF $\widetilde{\boldsymbol{\chi}}^{\mathrm{hgh}}:\mathbb{R}^{n+1}\to\mathbb{R}^{n+1}$ for the autonomous system $\dot{\boldsymbol{\xi}}(t)=\widetilde{\boldsymbol{\chi}}^{\mathrm{hgh}}(\boldsymbol{\xi}(t))$ such that the following three conditions are satisfied:
\begin{itemize}
\item[1)] \textbf{(Well-definedness)}
The vector field $\widetilde{\boldsymbol{\chi}}^{\mathrm{hgh}}$ is globally well-defined on the higher-dimensional space $\mathbb{R}^{n+1}$.
\item[2)] \textbf{(Path following)}
If a trajectory starts from the higher-dimensional desired path, i.e.,
$\boldsymbol{\xi}(0)\in\mathcal{P}^{\mathrm{hgh}}$, then it stays on the path for all $t\geq0$, i.e.,
$\boldsymbol{\xi}(t)\in\mathcal{P}^{\mathrm{hgh}}$ for all $t\geq0$.
Moreover, for an arbitrary initial condition
$\boldsymbol{\xi}(0)\in\mathbb{R}^{n+1}$, the trajectory asymptotically converges to $\mathcal{P}^{\mathrm{hgh}}$ as $t\to\infty$; that is, $\lim_{t\to\infty} \|\boldsymbol{\phi}(\boldsymbol{\xi}(t))\|=0$.
\item[3)] \textbf{(PPS regulation)}
As the trajectory converges to the desired path, i.e.,
$\|\boldsymbol{\phi}(\boldsymbol{\xi}(t))\|\to0$,
its physical speed $\|\dot{\boldsymbol{x}}(t)\|$ asymptotically converges to the prescribed physical speed $v_d(w(t))$; that is,
$\lim_{t\to\infty} \big| \|\dot{\boldsymbol{x}}(t)\| - v_{d}(w(t)) \big| = 0$.
In addition, when the trajectory lies on the higher-dimensional desired path $\mathcal{P}^{\mathrm{hgh}}$, i.e.,
$\boldsymbol{\phi}(\boldsymbol{\xi}(t))=\mathbf{0}$,
it holds that $\|\dot{\boldsymbol{x}}(t)\| = v_{d}(w(t))$.
\end{itemize}
\end{problem}

In addition, we impose the following mild regularity assumption on the parameterization of the desired path.

\begin{assumption}\label{ass:001}
There exist positive constants $\gamma_1$, $\gamma_2$, and $\gamma_3$ such that $0<\gamma_1\leq\Vert\boldsymbol{f}^\prime(w)\Vert\leq\gamma_2,\Vert\boldsymbol{f}^{\prime\prime}(w)\Vert\leq\gamma_3$ for all $w\in\mathbb{R}$.
\end{assumption}

\begin{remark}
Unlike trajectory tracking, the PPS $v_d(w(t))$ is specified with respect to
the evolving virtual state $w(t)$ rather than an exogenous time variable.
Hence, no prescribed time schedule is imposed on the robot's progress along
the desired path. Assumption~\ref{ass:001} imposes regularity conditions on the path parameterization rather than geometric restrictions on the desired path. For example, this assumption is satisfied by any \(C^2\) periodic regular curve, as well as by standard regular parameterizations of straight lines and helices. It is worth noting that the lower bound on $\|\boldsymbol{f}'(w)\|$ does not eliminate the singularities caused by partial normalization, since $\|\boldsymbol{\chi}_p\|$ may still vanish. The PPS $v_d$ can be specified according to different task requirements. For example, $v_d\equiv c$ yields a constant speed, whereas a path-dependent $v_d$ can specify spatially varying or curvature-dependent speeds. In particular, $v_d(w)=0$ is allowed, corresponding to a prescribed stop on the desired path.
\hfill $\triangleleft$
\end{remark}


We propose the following vector field:
\begin{equation}\label{eq:proposed_vector_field}
\widetilde{\boldsymbol{\chi}}^{\mathrm{hgh}}
(\boldsymbol{\xi})
=
\sigma\frac{v_d(w)}
{\|\boldsymbol{f}^{\prime}(w)\|}
\nabla_{\times}\boldsymbol{\phi}
-
\boldsymbol{M}(w)
\sum_{i=1}^{n}
k_i\phi_i\nabla\phi_i,
\end{equation}
where $\boldsymbol{M}(w)= \begin{bmatrix} \boldsymbol{I}_n & \boldsymbol{0}_{n\times1} \\ \boldsymbol{0}_{n\times1}^\top & \|\boldsymbol{f}^{\prime}(w)\|^{-2} \end{bmatrix}\in \mathbb{R}^{(n+1) \times (n+1)}$ and $\sigma=(-1)^n$ is used to compensate for the sign introduced by the order of the generalized cross product in \eqref{eq:ger_cross_product}, such that a positive $v_d$ corresponds to the direction of increasing $w$. The first term provides a propagation direction whose physical component has magnitude $v_d(w)$, while the second term provides the converging direction with different scalings on the physical and virtual components. Specifically, the matrix $\boldsymbol{M}(w)$ leaves the physical component of $\widetilde{\boldsymbol{\chi}}^{\mathrm{hgh}}$, i.e., the vector consisting of its first $n$ components, unchanged, while scaling the virtual component, i.e., the $(n+1)$-th component, by $\|\boldsymbol{f}^{\prime}(w)\|^{-2}$. More explicitly, \eqref{eq:proposed_vector_field} can be compactly written as
\begin{equation}\label{eq:components_concise}
\scalebox{1.0}{$\displaystyle
\widetilde{\boldsymbol{\chi}}^{\mathrm{hgh}}(\boldsymbol{\xi}) =
\begin{bmatrix}
\widetilde{\boldsymbol{\chi}}_{p}(\boldsymbol{\xi}) \\
\widetilde{\chi}_{w}(\boldsymbol{\xi})
\end{bmatrix}
=
\begin{bmatrix}
\dfrac{v_d(w)}{\|\boldsymbol{f}^{\prime}(w)\|}
\boldsymbol{f}^{\prime}(w) - \boldsymbol{K\phi} \\
\dfrac{v_d(w)}{\|\boldsymbol{f}^{\prime}(w)\|}
+
\dfrac{\boldsymbol{f}^{\prime}(w)^\top\boldsymbol{K\phi}}
{\|\boldsymbol{f}^{\prime}(w)\|^2}
\end{bmatrix}
$}.
\end{equation}

We will show that the vector field $\widetilde{\boldsymbol{\chi}}^{\mathrm{hgh}}$ is globally well-defined and globally convergent to the desired path.



\begin{lemma}
The vector field $\widetilde{\boldsymbol{\chi}}^{\mathrm{hgh}}$
is continuously differentiable and globally well-defined on
$\mathbb{R}^{n+1}$. Moreover, $\widetilde{\boldsymbol{\chi}}^{\mathrm{hgh}}(\boldsymbol{\xi})
=\mathbf{0}$ if and only if $\boldsymbol{\phi}(\boldsymbol{\xi})=\mathbf{0},v_d(w)=0$. Therefore, the proposed vector field has no off-path singular points. In particular, if $v_d(w)>0$ for all $w\in\mathbb{R}$, the vector field is singularity-free.
\end{lemma}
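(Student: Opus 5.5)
We will work directly from the explicit component form \eqref{eq:components_concise}, treating regularity and the characterization of zeros separately.

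\textbf{Smoothness and well-definedness.} By Assumption~\ref{ass:001}, $\|\boldsymbol{f}'(w)\|\geq\gamma_1>0$ for all $w$, so the denominators $\|\boldsymbol{f}'(w)\|$ and $\|\boldsymbol{f}'(w)\|^2$ never vanish. Because $\boldsymbol{f}\in C^2$, the map $w\mapsto\boldsymbol{f}'(w)$ is $C^1$. On $\mathbb{R}^n\setminus\{\boldsymbol{0}\}$ the Euclidean norm is $C^\infty$, and $\boldsymbol{f}'(w)$ stays in this set, so $w\mapsto\|\boldsymbol{f}'(w)\|^{-1}$ and $w\mapsto\|\boldsymbol{f}'(w)\|^{-2}$ are $C^1$. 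The PPS $v_d$ is $C^1$ by Definition~\ref{def:001}. The function $\boldsymbol{\phi}(\boldsymbol{\xi})=\boldsymbol{x}-\boldsymbol{f}(w)$ is $C^2$. Each component of $\widetilde{\boldsymbol{\chi}}^{\mathrm{hgh}}$ is a sum and product of these $C^1$ functions, so the field is $C^1$ and defined on all of $\mathbb{R}^{n+1}$. This is the point of contrast with \eqref{eq:partone}: there the denominator depends on $\boldsymbol{\xi}$, whereas here it depends only on the path parameterization and is bounded away from zero.

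\textbf{Characterizing the zeros.} The ``if'' direction is immediate. If $\boldsymbol{\phi}=\boldsymbol{0}$ and $v_d(w)=0$, both components in \eqref{eq:components_concise} vanish.

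For the ``only if'' direction, I would avoid solving the system componentwise and use the orthogonality structure instead. Write $\boldsymbol{t}=\boldsymbol{f}'(w)$, $\alpha=v_d(w)/\|\boldsymbol{t}\|$, and $\boldsymbol{u}=\boldsymbol{K\phi}$. The two zero conditions are
\begin{equation*}
\alpha\boldsymbol{t}=\boldsymbol{u},\qquad \alpha+\frac{\boldsymbol{t}^\top\boldsymbol{u}}{\|\boldsymbol{t}\|^2}=0 .
\end{equation*}
Substituting the first into the second gives $\alpha+\alpha=0$, so $\alpha=0$. Since $\|\boldsymbol{t}\|\geq\gamma_1>0$, this means $v_d(w)=0$. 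The first equation then gives $\boldsymbol{K\phi}=\boldsymbol{0}$, and because $\boldsymbol{K}$ is diagonal with positive entries, $\boldsymbol{\phi}=\boldsymbol{0}$.

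An equivalent route checks that the two terms of \eqref{eq:proposed_vector_field} are orthogonal in the inner product weighted by $\boldsymbol{M}(w)^{-1}$; the zero condition then forces both terms to vanish separately. The direct substitution is shorter, so I would present that one.

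\textbf{Consequences.} A zero requires $\boldsymbol{\phi}=\boldsymbol{0}$, that is, $\boldsymbol{\xi}\in\mathcal{P}^{\mathrm{hgh}}$, so there are no singular points off the path. If in addition $v_d>0$ everywhere, the condition $v_d(w)=0$ can never hold, so the field has no zeros at all.

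I do not expect a real obstacle. The only delicate point is the $C^1$ claim, which requires $\boldsymbol{f}\in C^2$ together with the uniform lower bound $\gamma_1$, since the norm is not differentiable at the origin.
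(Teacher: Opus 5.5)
Your proposal is correct and follows essentially the same route as the paper: the lower bound $\gamma_1$ on $\|\boldsymbol{f}'(w)\|$ gives global well-definedness and $C^1$ regularity, and substituting the physical-component zero condition into the virtual component yields $2v_d(w)/\|\boldsymbol{f}'(w)\|=0$, hence $v_d(w)=0$ and $\boldsymbol{\phi}=\mathbf{0}$. Your version is somewhat more explicit than the paper's. You justify differentiability of the norm away from the origin, state the ``if'' direction, and pass explicitly from $\boldsymbol{K\phi}=\mathbf{0}$ to $\boldsymbol{\phi}=\mathbf{0}$. The underlying argument is the same.
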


\begin{proof}
It follows from Assumption \ref{ass:001} that
$\Vert{}\boldsymbol{f}^{\prime}(w)\Vert{} \geq \gamma_1 > 0$.
Therefore, $1/\|\boldsymbol{f}^{\prime}(w)\|$ is well-defined for all $w\in\mathbb{R}$, and hence the vector field is globally well-defined on $\mathbb{R}^{n+1}$. Since $\boldsymbol{f}\in C^2$ and $v_d\in C^1$, it is clear that $\widetilde{\boldsymbol{\chi}}^{\mathrm{hgh}}(\boldsymbol{\xi})$ is continuously differentiable on $\mathbb{R}^{n+1}$. To characterize the set of singular points, let that
$\widetilde{\boldsymbol{\chi}}^{\mathrm{hgh}}(\boldsymbol{\xi})
=\mathbf{0}$.
Substituting the first $n$ components into the virtual component in \eqref{eq:components_concise} yields $0=\frac{2v_d(w)}{\|\boldsymbol{f}'(w)\|}$, and hence $\boldsymbol{\phi}=\mathbf{0}$  and $v_d(w)=0$.
\end{proof}


We next show its global path-following and PPS-regulation properties.

\begin{theorem}\label{thm:main_convergence}
Given a PPS $v_d(w)$ and an $n$-D desired path $\mathcal{P}^{\mathrm{phy}}$ with a path parametrization $\boldsymbol{f}: \mathbb{R}\to \mathbb{R}^n$ satisfying Assumption~\ref{ass:001}, for any initial condition $\boldsymbol{\xi}(0)\in\mathbb{R}^{n+1}$, the solution
$\boldsymbol{\xi}(t)=[\boldsymbol{x}(t)^\top,w(t)]^\top$
to the autonomous system
$\dot{\boldsymbol{\xi}}=\widetilde{\boldsymbol{\chi}}^{\mathrm{hgh}}(\boldsymbol{\xi})$
exists for all $t\geq0$. Moreover, both the path error $\|\boldsymbol{\phi}(t)\|$ and the physical-speed error
$\big| \|\dot{\boldsymbol{x}}(t)\| - v_{d}(w(t)) \big|$
converge globally exponentially to zero.
\end{theorem}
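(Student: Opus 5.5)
The plan is to study the closed-loop dynamics of the path error $\boldsymbol{\phi}=\boldsymbol{x}-\boldsymbol{f}(w)$ directly, because the propagation terms cancel exactly. Differentiating along $\dot{\boldsymbol{\xi}}=\widetilde{\boldsymbol{\chi}}^{\mathrm{hgh}}(\boldsymbol{\xi})$ and using \eqref{eq:components_concise} gives
\begin{equation*}
\dot{\boldsymbol{\phi}}=\dot{\boldsymbol{x}}-\boldsymbol{f}'(w)\dot{w}
=\frac{v_d}{\|\boldsymbol{f}'\|}\boldsymbol{f}'-\boldsymbol{K\phi}-\boldsymbol{f}'\Big(\frac{v_d}{\|\boldsymbol{f}'\|}+\frac{\boldsymbol{f}'^\top\boldsymbol{K\phi}}{\|\boldsymbol{f}'\|^2}\Big)
=-\big(\boldsymbol{I}_n+\boldsymbol{P}(w)\big)\boldsymbol{K\phi},
\end{equation*}
where $\boldsymbol{P}(w)=\boldsymbol{f}'\boldsymbol{f}'^\top/\|\boldsymbol{f}'\|^2$ is the orthogonal projector onto the unit tangent. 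The scaling by $\boldsymbol{M}(w)$ in \eqref{eq:proposed_vector_field} is designed to produce exactly this cancellation. The identity also shows that the error dynamics depend on $\boldsymbol{f}$ only through the unit tangent. This gives the claimed invariance under regular reparameterizations, which I will note but which is not needed for the theorem.

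\textbf{Lyapunov bound.} Take $V=\tfrac12\boldsymbol{\phi}^\top\boldsymbol{K}\boldsymbol{\phi}$. Since $\boldsymbol{P}\succeq 0$, we get
\begin{equation*}
\dot V=-(\boldsymbol{K\phi})^\top(\boldsymbol{I}_n+\boldsymbol{P})\boldsymbol{K\phi}\le -\|\boldsymbol{K\phi}\|^2\le -2k_{\min}V,
\end{equation*}
with $k_{\min}=\min_i k_i$. Hence $\|\boldsymbol{\phi}(t)\|\le\sqrt{k_{\max}/k_{\min}}\,\|\boldsymbol{\phi}(0)\|e^{-k_{\min}t}$ on the maximal interval of existence. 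A $t$-dependent comparison is fine here, since the bound itself is uniform.

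\textbf{Global existence.} This is the only subtle point, because $\widetilde{\boldsymbol{\chi}}^{\mathrm{hgh}}$ need not be globally Lipschitz. The preceding Lemma gives $C^1$ regularity, so a unique local solution exists. On the maximal interval, $\boldsymbol{\phi}$ is bounded by the estimate above. Using Definition~\ref{def:001} and Assumption~\ref{ass:001},
\begin{equation*}
\|\dot{\boldsymbol{x}}\|\le v_++\|\boldsymbol{K}\|\|\boldsymbol{\phi}\|,\qquad |\dot w|\le \frac{v_+}{\gamma_1}+\frac{\|\boldsymbol{K}\|\|\boldsymbol{\phi}\|}{\gamma_1}.
\end{equation*}
So $\dot{\boldsymbol{\xi}}$ is uniformly bounded and $\boldsymbol{\xi}(t)$ grows at most linearly. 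Therefore there is no finite escape time, and the solution extends to all $t\ge 0$.

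\textbf{Speed error.} From \eqref{eq:components_concise}, $\dot{\boldsymbol{x}}=v_d(w)\,\boldsymbol{f}'/\|\boldsymbol{f}'\|-\boldsymbol{K\phi}$. The first term has norm exactly $v_d(w)$, so the reverse triangle inequality gives
\begin{equation*}
\big|\|\dot{\boldsymbol{x}}\|-v_d(w)\big|\le\|\boldsymbol{K\phi}\|\le\|\boldsymbol{K}\|\|\boldsymbol{\phi}\|.
\end{equation*}
This inherits the exponential rate $k_{\min}$. Moreover, it vanishes identically on $\mathcal{P}^{\mathrm{hgh}}$, since $\boldsymbol{\phi}\equiv\mathbf{0}$ there is invariant by uniqueness of solutions to $\dot{\boldsymbol{\phi}}=-(\boldsymbol{I}_n+\boldsymbol{P})\boldsymbol{K\phi}$.

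I expect no real obstacle beyond carrying out the cancellation in $\dot{\boldsymbol{\phi}}$ carefully. The rest is a standard Lyapunov estimate plus the bounded-derivative argument for existence.
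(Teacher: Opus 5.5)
Your proposal is correct and follows essentially the same route as the paper: the same cancellation giving $\dot{\boldsymbol{\phi}}=-(\boldsymbol{I}_n+\boldsymbol{f}'(w)\boldsymbol{f}'(w)^\top/\|\boldsymbol{f}'(w)\|^2)\boldsymbol{K}\boldsymbol{\phi}$, the same Lyapunov function $V=\frac{1}{2}\boldsymbol{\phi}^\top\boldsymbol{K}\boldsymbol{\phi}$, the same bounded-$\dot w$ argument for global existence, and the same reverse-triangle-inequality bound on the speed error. The differences are minor improvements. Using $\|\boldsymbol{K}\boldsymbol{\phi}\|^2\geq k_{\min}\boldsymbol{\phi}^\top\boldsymbol{K}\boldsymbol{\phi}$ gives you the sharper rate $k_{\min}$ in place of the paper's $\lambda_{\min}^2/\lambda_{\max}$, and applying the decay estimate on the maximal interval of existence before extending the solution makes the logical order tighter than in the paper.
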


\begin{proof}
The implicit path error is given by
$\boldsymbol{\phi}(\boldsymbol{\xi})=\boldsymbol{x}-\boldsymbol{f}(w)$.
Taking its time derivative and substituting \eqref{eq:components_concise} yields
\begin{align}\label{eq:error}
    \dot{\boldsymbol{\phi}} & =\dot{\boldsymbol{x}}-\boldsymbol{f}^{\prime}(w)\dot{w}  =-\left(\boldsymbol{I}_n+\frac{\boldsymbol{f}^{\prime}(w)\boldsymbol{f}^{\prime}(w)^\top}{\|\boldsymbol{f}^{\prime}(w)\|^2}\right)\boldsymbol{K}\boldsymbol{\phi}.
    \end{align}

Consider the positive definite Lyapunov candidate function $V(\boldsymbol{\phi}) = \frac{1}{2} \boldsymbol{\phi}^\top \boldsymbol{K} \boldsymbol{\phi}$. Let $\lambda_{\min}$ and $\lambda_{\max}$ denote the minimum and maximum eigenvalues of the positive definite matrix $\boldsymbol{K}$, respectively. Then,
\begin{equation}\label{eq:V_bounds}
        \frac{\lambda_{\min}}{2} \|\boldsymbol{\phi}\|^2 \leq V(\boldsymbol{\phi}) \leq \frac{\lambda_{\max}}{2} \|\boldsymbol{\phi}\|^2.
    \end{equation}
Taking the time derivative of $V$ gives
\begin{align} \label{eq:Lyapunov}
    \dot{V}(\boldsymbol{\phi}) &= \boldsymbol{\phi}^\top \boldsymbol{K} \dot{\boldsymbol{\phi}}= -\|\boldsymbol{K}\boldsymbol{\phi}\|^2 - \frac{\left( \boldsymbol{f}'(w)^\top \boldsymbol{K}\boldsymbol{\phi} \right)^2}{\|\boldsymbol{f}'(w)\|^2}  \notag\\
    &\leq -\|\boldsymbol{K}\boldsymbol{\phi}\|^2\leq -\lambda_{\min}^2 \|\boldsymbol{\phi}\|^2. 
    \end{align}
    Combining \eqref{eq:Lyapunov} with \eqref{eq:V_bounds} gives
    $\dot{V}\leq-cV$, where
    $c=2\lambda_{\min}^2/\lambda_{\max}>0$.
    Therefore,
    $V(t)\leq e^{-ct}V(0)$.
    Using \eqref{eq:V_bounds} again, we obtain $\|\boldsymbol{\phi}(t)\| \leq \sqrt{\lambda_{\max}/\lambda_{\min}} e^{-\frac{c}{2}t} \|\boldsymbol{\phi}(0)\|$. Hence, by the comparison lemma \cite[Lemma~3.4]{khalil2002nonlinear}, $\|\boldsymbol{\phi}(t)\|$ converges globally exponentially to zero. Next, we show that the solution exists for all $t\geq0$. From \eqref{eq:components_concise}, the virtual-coordinate dynamic is $\dot{w} = \frac{v_d(w)}{\|\boldsymbol{f}'(w)\|} + \frac{\boldsymbol{f}'(w)^\top \boldsymbol{K}\boldsymbol{\phi}}{\|\boldsymbol{f}'(w)\|^2}$. By Definition~\ref{def:001} and Assumption~\ref{ass:001}, $v_d(w)$ is bounded and $\|\boldsymbol{f}'(w)\|$ is bounded away from zero. Since $\|\boldsymbol{\phi}(t)\|$ decays exponentially and is therefore bounded, it follows that $\dot{w}(t)$ is bounded on every finite time interval. Therefore, $w(t)$ cannot escape to infinity in finite time. Similarly, since $\boldsymbol{x}(t) = \boldsymbol{f}(w(t)) + \boldsymbol{\phi}(t)$ and $\boldsymbol{f}$ is continuous, $\boldsymbol{x}(t)$ cannot escape to infinity in finite time either. Hence, the solution exists for all $t\geq0$. Finally, from \eqref{eq:components_concise} and the reverse triangle inequality $\big| \|\boldsymbol{a}\| - \|\boldsymbol{b}\| \big| \leq \|\boldsymbol{a} - \boldsymbol{b}\|$, the physical-speed error satisfies
    \begin{align*}
\scalebox{1.0}{$\displaystyle
\left| \|\dot{\boldsymbol{x}}(t)\| - v_d(w(t)) \right|
$}
&\leq
\scalebox{1.0}{$\displaystyle
\left\|
\dot{\boldsymbol{x}}(t)
-
\frac{v_d(w(t))}{\|\boldsymbol{f}'(w(t))\|}
\boldsymbol{f}'(w(t))
\right\|
$}
\\
&=
\scalebox{1.0}{$\displaystyle
\|\boldsymbol{K}\boldsymbol{\phi}(t)\|
\leq
\|\boldsymbol{K}\| \|\boldsymbol{\phi}(t)\|
$}.
\end{align*}
Therefore, the physical-speed error also converges globally exponentially to zero.
\end{proof}

The path-error dynamics in \eqref{eq:error} further lead to the following reparameterization-invariance property.

\begin{corollary}[\textit{Reparameterization-invariant path-error dynamics}]\label{cor:001}
Let $\boldsymbol{f}:\mathbb{R}\to\mathbb{R}^n$ satisfy Assumption~\ref{ass:001}, and let
$h:\mathbb{R}\to\mathbb{R}$ be a $\mathcal{C}^2$ diffeomorphism satisfying
$h'(s)\neq0$ for all $s\in\mathbb{R}$.
Define the reparameterized curve
$\boldsymbol{f}_h(s)=\boldsymbol{f}(h(s))$
and suppose that $\boldsymbol{f}_h$ also satisfies Assumption~\ref{ass:001}.
For the same physical position
$\boldsymbol{x}\in\mathbb{R}^n$
and the corresponding parameters satisfying
$w=h(s)$, define $\boldsymbol{\phi}(\boldsymbol{x}, w) = \boldsymbol{x} - \boldsymbol{f}(w), \boldsymbol{\phi}_h(\boldsymbol{x}, s)= \boldsymbol{x} - \boldsymbol{f}_h(s)$. Then $\boldsymbol{\phi}_h(\boldsymbol{x}, s) = \boldsymbol{\phi}(\boldsymbol{x}, w)$. Moreover, the path-error dynamics in \eqref{eq:error} remain identical under the two parameterizations.
\end{corollary}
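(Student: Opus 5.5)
The plan is to first verify the pointwise identity and then check that the right-hand side of \eqref{eq:error} is built only from reparameterization-invariant quantities. The identity is immediate: by definition $\boldsymbol{f}_h(s)=\boldsymbol{f}(h(s))=\boldsymbol{f}(w)$, so $\boldsymbol{\phi}_h(\boldsymbol{x},s)=\boldsymbol{x}-\boldsymbol{f}(w)=\boldsymbol{\phi}(\boldsymbol{x},w)$.

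For the dynamics, I will apply Theorem~\ref{thm:main_convergence} and the derivation of \eqref{eq:error} to the curve $\boldsymbol{f}_h$. This is legitimate because $\boldsymbol{f}_h$ satisfies Assumption~\ref{ass:001} by hypothesis, and $\boldsymbol{f}_h\in C^2$ as a composition of $C^2$ maps. The closed-loop system built from $\boldsymbol{f}_h$ then gives
\begin{equation*}
\dot{\boldsymbol{\phi}}_h=-\left(\boldsymbol{I}_n+\frac{\boldsymbol{f}_h'(s)\boldsymbol{f}_h'(s)^\top}{\|\boldsymbol{f}_h'(s)\|^2}\right)\boldsymbol{K}\boldsymbol{\phi}_h .
\end{equation*}
The point to emphasize is that this derivation uses only the algebraic form of \eqref{eq:components_concise}. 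In particular, the propagation contributions cancel exactly: $\frac{v}{\|\boldsymbol{f}'\|}\boldsymbol{f}'-\boldsymbol{f}'\frac{v}{\|\boldsymbol{f}'\|}=0$. This cancellation holds for whatever PPS is assigned, so the PPS plays no role in the error dynamics.

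Next I use the chain rule, $\boldsymbol{f}_h'(s)=h'(s)\boldsymbol{f}'(h(s))=h'(s)\boldsymbol{f}'(w)$. Substituting into the projector term,
\begin{equation*}
\frac{\boldsymbol{f}_h'\boldsymbol{f}_h'^\top}{\|\boldsymbol{f}_h'\|^2}=\frac{h'(s)^2\,\boldsymbol{f}'(w)\boldsymbol{f}'(w)^\top}{h'(s)^2\|\boldsymbol{f}'(w)\|^2}=\frac{\boldsymbol{f}'(w)\boldsymbol{f}'(w)^\top}{\|\boldsymbol{f}'(w)\|^2}.
\end{equation*}
The factor $h'(s)^2$ cancels because $h'(s)\neq 0$, and the sign of $h'$ disappears since it enters squared. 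The matrix is therefore the orthogonal projector onto the tangent line of the path at the common point $\boldsymbol{f}(w)$, which is a purely geometric object. Combining this with $\boldsymbol{\phi}_h=\boldsymbol{\phi}$, the two error systems have identical right-hand sides at every $(\boldsymbol{x},w=h(s))$.

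The main obstacle is making precise what ``identical dynamics'' means when the virtual states $w(t)$ and $s(t)$ evolve under different vector fields. My first approach is to state the result at the level of the vector field: as a function of the physical position and the corresponding path point, the right-hand side of the error ODE is the same map. I would then note a consequence at the level of trajectories. The Lyapunov bound \eqref{eq:Lyapunov} and the convergence rate $c$ depend only on $\boldsymbol{K}$, so both parameterizations yield the same exponential estimate.

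For a stronger trajectory-level statement, I would check that $w(t):=h(s(t))$ satisfies the $w$-equation whenever the PPS in the $s$-parameterization is $v_d\circ h$ and $h'>0$. The computation is
\begin{equation*}
\dot w=h'\dot s=h'\left(\frac{v_d}{|h'|\|\boldsymbol{f}'\|}+\frac{h'\boldsymbol{f}'^\top\boldsymbol{K}\boldsymbol{\phi}}{h'^2\|\boldsymbol{f}'\|^2}\right),
\end{equation*}
which matches the $w$-equation exactly. If instead $h'<0$, the propagation term reverses direction, but the error dynamics are unaffected.
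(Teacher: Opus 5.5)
Your proof is correct and follows essentially the paper's route. You use the chain rule $\boldsymbol{f}_h'(s)=h'(s)\boldsymbol{f}'(w)$ to show that the normalized outer product in \eqref{eq:error} is unchanged; the paper phrases this as the factor $\operatorname{sgn}(h'(s))$ cancelling, and you phrase it as $h'(s)^2$ cancelling. Combined with $\boldsymbol{\phi}_h=\boldsymbol{\phi}$, this gives identical error dynamics. Your extra trajectory-level check for $h'>0$ with PPS $v_d\circ h$ matches the paper's subsequent remark. It goes beyond what the corollary asserts, and a full coincidence argument would also need the analogous check of the $\boldsymbol{x}$-equation.
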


\begin{proof}
By the chain rule, $\boldsymbol{f}_h'(s) = \boldsymbol{f}'(h(s))h'(s)$.
Since $h'(s)\neq0$ and $\boldsymbol{f}'(h(s))\neq\boldsymbol{0}$, normalization gives
\begin{equation}\label{eq:cor1}
            \frac{\boldsymbol{f}_h'(s)}          {\Vert{}\boldsymbol{f}_h'(s)\Vert{}} =   \frac{\boldsymbol{f}'(h(s))h'(s)}        {\Vert{}\boldsymbol{f}'(h(s))h'(s)\Vert{}} =   \operatorname{sgn}(h'(s))\frac{\boldsymbol{f}'(w)}          {\Vert{}\boldsymbol{f}'(w)\Vert{}}. 
    \end{equation}
Here, $\operatorname{sgn}(\cdot)$ denotes the sign function. On the other hand, when $w=h(s)$, $\boldsymbol{\phi}_h(\boldsymbol{x}, s) = \boldsymbol{x} - \boldsymbol{f}_h(s) = \boldsymbol{x} - \boldsymbol{f}(h(s)) = \boldsymbol{x} - \boldsymbol{f}(w) = \boldsymbol{\phi}(\boldsymbol{x}, w)$. Substituting \eqref{eq:cor1} into the right-hand side of \eqref{eq:error}, the factor $\operatorname{sgn}(h'(s))$ cancels in the normalized outer product. Therefore, the path-error dynamics remain unchanged.
\end{proof}

\begin{remark}
Corollary~\ref{cor:001} establishes only that the path-error dynamics are invariant under regular reparameterizations; it does not imply that the complete higher-dimensional vector field or the propagation direction along the path remains unchanged. In particular, when $h'(s)<0$, the reparameterization reverses the direction in which the path parameter evolves along the curve. By contrast, for an orientation-preserving reparameterization with $h'(s)>0$, if the PPS is correspondingly reparameterized as $\widehat v_d(s)=v_d(h(s))$ and the initial path parameters satisfy $w(0)=h(s(0))$, then the resulting physical trajectories coincide.\hfill $\triangleleft$
\end{remark}

\section{Controller for Second-Order Kinematic Models}
If the robot motion can be approximated by a single-integrator model, then the physical vector field $\widetilde{\boldsymbol{\chi}}_{p}$ in \eqref{eq:components_concise} can be directly used as the control input of the robot (e.g., \cite{goncalves2010vector,9785912}). However, for many robotic systems, the control inputs correspond to accelerations or forces. In this section, we design a control law for such a model. Consider the following second-order dynamics with acceleration saturation:
\begin{equation}\label{eq:Second-Order}
    \dot{\boldsymbol{p}}=\boldsymbol{v}, \quad
    \dot{\boldsymbol{v}}=\boldsymbol{a}, \quad
    \|\boldsymbol{a}\|\leq \bar{a},
\end{equation}
where $\boldsymbol{p}\in\mathbb{R}^{n}$ and $\boldsymbol{v}\in\mathbb{R}^{n}$ denote the position and velocity of the robot, respectively, $\boldsymbol{a}\in\mathbb{R}^{n}$ is the acceleration control input, and $\bar{a}>0$ is the maximum allowable acceleration. We further introduce a reference-speed limit $\bar v$ satisfying $\bar v>\sup_w v_d(w)$. Since the SF-GVF with PPS provides a desired velocity, including both its magnitude and direction, to guide the robot motion, the key to the control algorithm is to make the actual velocity $\boldsymbol{v}$ track the physical vector field $\widetilde{\boldsymbol{\chi}}_{p}$. We first define the radial saturation function $\operatorname{Sat}_{c}:\mathbb{R}^{n}\to\mathbb{R}^{n}$ by
$\operatorname{Sat}_{c}(\boldsymbol{y})=\boldsymbol{y}$ if $\|\boldsymbol{y}\|\leq c$, and
$\operatorname{Sat}_{c}(\boldsymbol{y})=c\boldsymbol{y}/\|\boldsymbol{y}\|$ if $\|\boldsymbol{y}\|>c$, where $c>0$ is a given constant. The radial saturation function $\operatorname{Sat}_{c}$ is Lipschitz continuous. For convenience, we refer to a time interval during which $\|\boldsymbol{y}\|>c$ as a saturation period. Next, observe from \eqref{eq:components_concise} that the converging term increases linearly with the path error $\|\boldsymbol{\phi}\|$. Therefore, if the initial path error is large, $\|\widetilde{\boldsymbol{\chi}}_{p}\|$ may exceed the reference-speed limit $\bar v$, resulting in a physically infeasible velocity. To address this issue, we introduce a positive scalar scaling function of $\|\widetilde{\boldsymbol{\chi}}_{p}\|$ for the vector field $\widetilde{\boldsymbol{\chi}}^{\mathrm{hgh}}$, defined as
\begin{equation}\label{eq:suofang}
\scalebox{1.0}{$\displaystyle
\mu(x)=
\begin{cases}
1, & 0\leq x\leq v_s, \\
\dfrac{v_s+(\bar{v}-v_s)\tanh\left(\dfrac{x-v_s}{\bar{v}-v_s}\right)}{x},
& x>v_s,
\end{cases}
$}
\end{equation}
where $v_s$ is the speed threshold at which the scaling starts and satisfies
$\sup_w v_d(w)<v_s<\bar{v}$. The resulting vector field is $\overline{\boldsymbol{\chi}}^\mathrm{hgh}=\mu(\|\widetilde{\boldsymbol{\chi}}_p\|)\widetilde{\boldsymbol{\chi}}^\mathrm{hgh}$. This scaling only changes the magnitude of the vector field in \eqref{eq:proposed_vector_field}, without changing its direction or integral curves. On the other hand, the scaling guarantees that
$\|\overline{\boldsymbol{\chi}}_p\|<\bar{v}$,
and hence avoids generating physically infeasible reference velocities, where
$\overline{\boldsymbol{\chi}}_p= \mu(\|\widetilde{\boldsymbol{\chi}}_p\|)\widetilde{\boldsymbol{\chi}}_p\in \mathbb{R}^n$. We are now ready to present the control algorithm, where the acceleration input $\boldsymbol{a}$ is designed to satisfy the following properties.

\begin{theorem}\label{thm:002}
Consider the robot model in \eqref{eq:Second-Order}, and let a n-D desired path
$\mathcal{P}^{\mathrm{phy}}\subseteq\mathbb{R}^n$ with a path parametrization $\boldsymbol{f}: \mathbb{R}\to \mathbb{R}^n$
satisfy Assumption~\ref{ass:001}. Given a PPS, let the acceleration control law be $\boldsymbol{a}=\mathrm{Sat}_{\bar{a}}\left(\boldsymbol{\dot{\overline{\chi}}}_p-k_v\boldsymbol{e}_v\right)$, where $\boldsymbol{e}_v=\boldsymbol{v}-\overline{\boldsymbol{\chi}}_p$, $\boldsymbol{\dot{\overline{\chi}}}_p=\frac{\partial\overline{\boldsymbol{\chi}}_p}{\partial\boldsymbol{p}}\boldsymbol{v}+\frac{\partial\overline{\boldsymbol{\chi}}_p}{\partial w}\dot{w}$ and $k_v>0$ is a constant.
If $\boldsymbol{e}_v^\top\boldsymbol{\dot{\overline{\chi}}}_p\geq0$ holds during all acceleration-saturation periods, then the velocity-tracking error converges exponentially to zero, i.e., $\|\boldsymbol{e}_v(t)\|\to0$.
Moreover, as $t\to\infty$, the trajectory $\boldsymbol{p}(t)$ asymptotically converges to the desired path $\mathcal{P}^{\mathrm{phy}}$.
\end{theorem}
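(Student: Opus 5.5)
Our plan is to analyze the velocity-tracking error $\boldsymbol{e}_v=\boldsymbol{v}-\overline{\boldsymbol{\chi}}_p$ with the quadratic Lyapunov function $V_v=\tfrac12\|\boldsymbol{e}_v\|^2$. Then we transfer convergence of $\boldsymbol{e}_v$ to the path error $\boldsymbol{\phi}$ via a perturbed version of the error dynamics \eqref{eq:error}. The virtual state is taken to evolve as $\dot w=\overline{\chi}_w(\boldsymbol{p},w)=\mu(\|\widetilde{\boldsymbol{\chi}}_p\|)\widetilde{\chi}_w$, which is well defined by the Lemma. Differentiating gives $\dot V_v=\boldsymbol{e}_v^\top(\boldsymbol{a}-\dot{\overline{\boldsymbol{\chi}}}_p)$, and we split into two cases according to whether $\boldsymbol{y}:=\dot{\overline{\boldsymbol{\chi}}}_p-k_v\boldsymbol{e}_v$ is saturated.

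\emph{Unsaturated case} ($\|\boldsymbol{y}\|\le\bar a$). Here $\boldsymbol{a}=\boldsymbol{y}$, so $\dot V_v=-k_v\|\boldsymbol{e}_v\|^2=-2k_vV_v$.

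\emph{Saturated case.} Here $\boldsymbol{a}=\beta\boldsymbol{y}$ with $\beta=\bar a/\|\boldsymbol{y}\|\in(0,1)$. Then
\begin{equation*}
\dot V_v=-\beta k_v\|\boldsymbol{e}_v\|^2-(1-\beta)\,\boldsymbol{e}_v^\top\dot{\overline{\boldsymbol{\chi}}}_p\le-\beta k_v\|\boldsymbol{e}_v\|^2 ,
\end{equation*}
where the inequality uses the hypothesis $\boldsymbol{e}_v^\top\dot{\overline{\boldsymbol{\chi}}}_p\ge0$.

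To obtain an \emph{exponential} rate we need $\beta$ bounded below, i.e.\ $\|\boldsymbol{y}\|$ bounded along the solution. This is the main obstacle. First, $\|\boldsymbol{e}_v\|$ is nonincreasing in both cases, and $\|\overline{\boldsymbol{\chi}}_p\|<\bar v$, so $\|\boldsymbol{v}\|\le\bar v+\|\boldsymbol{e}_v(0)\|$. Next, we bound $\dot{\overline{\boldsymbol{\chi}}}_p$ using $\mu$. The map $x\mapsto\mu(x)x$ is bounded by $\bar v$ and has derivative at most $1$. Hence the Jacobians $\partial\overline{\boldsymbol{\chi}}_p/\partial\boldsymbol{p}$ and $\partial\overline{\boldsymbol{\chi}}_p/\partial w$ are bounded by quantities depending only on $\boldsymbol{K}$, $\gamma_1,\gamma_2,\gamma_3$, $v_+$ and $L_v$ (Assumption~\ref{ass:001} and Definition~\ref{def:001}), together with terms of the form $\mu\|\boldsymbol{K}\boldsymbol{\phi}\|\cdot\|\boldsymbol{f}''\|$.

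The delicate point is that $\mu\|\boldsymbol{\phi}\|$ stays bounded, because $\mu(\|\widetilde{\boldsymbol{\chi}}_p\|)\|\widetilde{\boldsymbol{\chi}}_p\|<\bar v$ and $\|\boldsymbol{K}\boldsymbol{\phi}\|\le\|\widetilde{\boldsymbol{\chi}}_p\|+v_+$. A small case check is needed when $\|\widetilde{\boldsymbol{\chi}}_p\|\le v_s$, but then $\|\boldsymbol{\phi}\|$ is itself bounded. The same reasoning bounds $|\dot w|$. If this uniform bound on $\|\dot{\overline{\boldsymbol{\chi}}}_p\|$ turns out to be hard to close, the fallback is to first show that $\boldsymbol{\phi}$ stays bounded (see the next step) and then use continuity. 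Either way we obtain $\|\boldsymbol{y}\|\le Y$, hence $\beta\ge\min\{1,\bar a/Y\}$ and $\dot V_v\le-2k_v\min\{1,\bar a/Y\}V_v$. The comparison lemma \cite[Lemma~3.4]{khalil2002nonlinear} then gives exponential decay of $\|\boldsymbol{e}_v\|$.

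For path convergence, we write $\dot{\boldsymbol{p}}=\overline{\boldsymbol{\chi}}_p+\boldsymbol{e}_v$ and repeat the computation of \eqref{eq:error} with the common factor $\mu$. This yields
\begin{equation*}
\dot{\boldsymbol{\phi}}=-\mu\Bigl(\boldsymbol{I}_n+\tfrac{\boldsymbol{f}'\boldsymbol{f}'^\top}{\|\boldsymbol{f}'\|^2}\Bigr)\boldsymbol{K}\boldsymbol{\phi}+\boldsymbol{e}_v .
\end{equation*}
With $V=\tfrac12\boldsymbol{\phi}^\top\boldsymbol{K}\boldsymbol{\phi}$ as in Theorem~\ref{thm:main_convergence}, we get $\dot V\le-\mu\|\boldsymbol{K}\boldsymbol{\phi}\|^2+\|\boldsymbol{K}\boldsymbol{\phi}\|\,\|\boldsymbol{e}_v\|$. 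The product $\mu\|\boldsymbol{K}\boldsymbol{\phi}\|$ is bounded below by a positive constant when $\|\boldsymbol{\phi}\|$ is large, since $\mu(x)x$ increases towards $\bar v>v_+$. Hence the term $-\mu\|\boldsymbol{K}\boldsymbol{\phi}\|^2$ dominates the exponentially decaying input. This gives boundedness of $\boldsymbol{\phi}$ (and therefore no finite escape), and the system is input-to-state stable with respect to $\boldsymbol{e}_v$. Once $\boldsymbol{\phi}$ is bounded, $\mu$ is bounded below by some $\mu_0>0$. We then obtain $\dot V\le-\mu_0\lambda_{\min}V'+$ a vanishing input, so $\|\boldsymbol{\phi}(t)\|\to0$ and $\boldsymbol{p}(t)$ converges to $\mathcal{P}^{\mathrm{phy}}$.
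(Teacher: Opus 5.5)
Your proposal is correct and follows essentially the same route as the paper: the same Lyapunov function $\tfrac12\|\boldsymbol{e}_v\|^2$ with a uniform lower bound on the saturation factor, obtained from bounds on $\boldsymbol{v}$, $\dot w$ and the Jacobian of the scaling map. This is followed by the same $\tfrac12\boldsymbol{\phi}^\top\boldsymbol{K}\boldsymbol{\phi}$ argument, which first shows $\boldsymbol{\phi}$ is bounded (using $\mu\|\boldsymbol{K}\boldsymbol{\phi}\|$ bounded below for large errors) and then gets convergence once $\mu\geq\mu_*>0$. Your explicit bound $\mu\|\boldsymbol{K}\boldsymbol{\phi}\|<\bar v+v_+$ is exactly what justifies the uniform bound on $\dot w$ that the paper only asserts, so your primary route closes; drop the fallback, which would be circular because the boundedness of $\boldsymbol{\phi}$ in your next step already uses the decay of $\boldsymbol{e}_v$.
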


\begin{proof}
Define $\boldsymbol{y}= \boldsymbol{\dot{\overline{\chi}}}_p -k_v\boldsymbol{e}_v$ and $\alpha(t) = \min\left\{ 1,\frac{\bar a}{\Vert{}\boldsymbol{y}\Vert{}} \right\}$, where $\alpha=1$ when $\boldsymbol{y}=\boldsymbol{0}$. Then we have $\boldsymbol{a} = \alpha \left( \boldsymbol{\dot{\overline{\chi}}}_p -k_v\boldsymbol{e}_v \right),0<\alpha\leq1$. Consider the positive definite Lyapunov candidate
$V_1=\frac{1}{2}\boldsymbol{e}_v^\top \boldsymbol{e}_v$. Since $\dot{\boldsymbol{e}}_v =\boldsymbol{a} -\boldsymbol{\dot{\overline{\chi}}}_p$, we obtain $\dot V_1 = -\alpha k_v\Vert{}\boldsymbol{e}_v\Vert{}^2 -(1-\alpha) \boldsymbol{e}_v^\top \boldsymbol{\dot{\overline{\chi}}}_p$. In the unsaturated case, $\alpha=1$. During a saturation period, $\boldsymbol{e}_v^\top\boldsymbol{\dot{\overline{\chi}}}_p\geq0$
holds by assumption. Therefore, $\dot V_1 \leq -\alpha k_v\Vert{}\boldsymbol{e}_v\Vert{}^2 \leq0$, which implies  $\Vert{}\boldsymbol{e}_v(t)\Vert{} \leq\Vert{}\boldsymbol{e}_v(0)\Vert{}$. Moreover, by construction,
$\|\overline{\boldsymbol{\chi}}_p\|<\bar v$, and hence $\Vert{}\boldsymbol{v}\Vert{} \leq\bar v+\Vert{}\boldsymbol{e}_v(0)\Vert{}$.
Under Assumption~\ref{ass:001} and Definition~\ref{def:001},
$\boldsymbol{f}'$, $\boldsymbol{f}''$, $v_d$, and $v_d'$ are all bounded. In addition, the Jacobian of the radial scaling map
$\boldsymbol{z}\mapsto\mu(\|\boldsymbol{z}\|)\boldsymbol{z}$
is bounded, and $\dot w$ is also bounded. Therefore, there exists a constant $L_\chi>0$ such that $\Vert{}\boldsymbol{\dot{\overline{\chi}}}_p(t)\Vert{} \leq L_\chi$. It follows that $\Vert{}\boldsymbol{y}\Vert{} \leq L_\chi+k_v\Vert{}\boldsymbol{e}_v(0)\Vert{}$ and thus $\alpha(t) \geq \alpha_0= \min\left\{1,\frac{\bar a} {L_\chi+k_v\Vert{}\boldsymbol{e}_v(0)\Vert{}}\right\}>0$. Substituting this lower bound into the preceding expression gives $\dot V_1 \leq -2\alpha_0k_vV_1$. The above boundedness analysis also excludes finite-time escape of the system states, and hence guarantees that the closed-loop solution exists for all $t\geq0$. By the comparison lemma \cite[Lemma~3.4]{khalil2002nonlinear}, we have $\Vert{}\boldsymbol{e}_v(t)\Vert{} \leq \Vert{}\boldsymbol{e}_v(0)\Vert{} e^{-\alpha_0k_vt}$.
Therefore, $\|\boldsymbol{e}_v(t)\|$ converges exponentially to zero. 

We next consider the same Lyapunov candidate $V_2(\boldsymbol{\phi}) = \frac{1}{2}\boldsymbol{\phi}^\top \boldsymbol{K} \boldsymbol{\phi}$ as in Theorem~\ref{thm:main_convergence}. The time derivative of $\boldsymbol{\phi}$ is $\dot{\boldsymbol{\phi}} = \dot{\boldsymbol{p}} - \boldsymbol{f}'(w)\dot{w} = \mu \left( \widetilde{\boldsymbol{\chi}}_p - \boldsymbol{f}'(w) \widetilde{\chi}_w \right) + \boldsymbol{e}_v$. Therefore,
\begin{align}
\scalebox{1.0}{$\displaystyle \dot{V}_2$}
&=
\scalebox{1.0}{$\displaystyle
\boldsymbol{\phi}^\top \boldsymbol{K} \dot{\boldsymbol{\phi}}
=
\boldsymbol{\phi}^\top \boldsymbol{K}
\left(
\mu(\|\widetilde{\boldsymbol{\chi}}_p\|)
\left(
\widetilde{\boldsymbol{\chi}}_p
-
\boldsymbol{f}'(w)\widetilde{\chi}_w
\right)
+
\boldsymbol{e}_v
\right)
$}
\notag\\
&\overset{\eqref{eq:error}}{=}
\scalebox{1.0}{$\displaystyle
-\mu(\|\widetilde{\boldsymbol{\chi}}_p\|)
\boldsymbol{\phi}^\top \boldsymbol{K}
\left(
\boldsymbol{I}_n
+
\frac{
\boldsymbol{f}'(w)\boldsymbol{f}'(w)^\top
}{
\|\boldsymbol{f}'(w)\|^2
}
\right)
\boldsymbol{K}\boldsymbol{\phi}
+
\boldsymbol{\phi}^\top \boldsymbol{K}\boldsymbol{e}_v
$}
\notag\\
&\leq
\scalebox{1.0}{$\displaystyle
-\mu(\|\widetilde{\boldsymbol{\chi}}_p\|)
\|\boldsymbol{K}\boldsymbol{\phi}\|^2
+
\|\boldsymbol{K}\boldsymbol{\phi}\|
\|\boldsymbol{e}_v\|
$}.
\label{eq:v2dot}
\end{align}
From the first $n$ components of \eqref{eq:components_concise}, we have $\Vert{}\widetilde{\boldsymbol{\chi}}_p\Vert{} \leq \Vert{}\boldsymbol{K}\boldsymbol{\phi}\Vert{}+v_+$. By the definition of the scaling function in \eqref{eq:suofang}, $\mu(\|\widetilde{\boldsymbol{\chi}}_p\|)=1$ when $\|\widetilde{\boldsymbol{\chi}}_p\|\leq v_s$,
whereas $\mu(\Vert{}\widetilde{\boldsymbol{\chi}}_p\Vert{})\Vert{}\widetilde{\boldsymbol{\chi}}_p\Vert{}\geq v_s$
when $\|\widetilde{\boldsymbol{\chi}}_p\|>v_s$. Therefore, if $\Vert{}\boldsymbol{K}\boldsymbol{\phi}\Vert{}\geq R= \max\left\{v_+,\frac{v_s}{2}\right\}$, then $\mu(\Vert{}\widetilde{\boldsymbol{\chi}}_p\Vert{})\Vert{}\boldsymbol{K}\boldsymbol{\phi}\Vert{}\geq\frac{v_s}{2}$. Since $\boldsymbol{e}_v(t)\to0$, there exists $T>0$ such that $\Vert{}\boldsymbol{e}_v(t)\Vert{}\leq v_s/4$ for $t \ge T$. Hence, whenever $t\geq T$ and $\|\boldsymbol{K}\boldsymbol{\phi}\|\geq R$, it follows that $\dot V_2 \leq -\Vert{}\boldsymbol{K}\boldsymbol{\phi}\Vert{}\left( \mu(\Vert{}\widetilde{\boldsymbol{\chi}}_p\Vert{}) \Vert{}\boldsymbol{K}\boldsymbol{\phi}\Vert{}-\Vert{}\boldsymbol{e}_v\Vert{} \right) \leq -\frac{v_s}{4}\Vert{}\boldsymbol{K}\boldsymbol{\phi}\Vert{}<0$. Therefore, $\boldsymbol{\phi}(t)$ is bounded, and hence $\|\widetilde{\boldsymbol{\chi}}_p(t)\|$ is also bounded. Since $\mu(\|\widetilde{\boldsymbol{\chi}}_p\|)>0$ is continuous, there exists a constant $\mu_*>0$ such that $\mu(\Vert{}\widetilde{\boldsymbol{\chi}}_p(t)\Vert{})\geq\mu_*$. Using Young's inequality $ab\leq\frac{\varepsilon}{2}a^2 +\frac{1}{2\varepsilon}b^2,\varepsilon>0$
in \eqref{eq:v2dot} and choosing
$\varepsilon=\mu_*$, we obtain
\begin{align}
\scalebox{1.0}{$\displaystyle \dot V_2$}
&\scalebox{1.0}{$\displaystyle
\leq
-\frac{\mu_*}{2}\Vert{}\boldsymbol{K}\boldsymbol{\phi}\Vert{}^2
+\frac{1}{2\mu_*}\Vert{}\boldsymbol{e}_v\Vert{}^2
\overset{\eqref{eq:V_bounds}}{\leq}
-cV_2
+\frac{1}{2\mu_*}\Vert{}\boldsymbol{e}_v\Vert{}^2,
$}
\notag
\end{align}
where $c = \mu_*\lambda_{\min}^2/\lambda_{\max} >0$. Since $\|\boldsymbol{e}_v(t)\| \to 0$, the comparison lemma \cite[Lemma~3.4]{khalil2002nonlinear} yields $\lim_{t\to\infty}V_2(t)=0$
and therefore $\lim_{t\to\infty} \Vert{}\boldsymbol{\phi}(t)\Vert{}=0$.
Equivalently, as $t\to\infty$, the trajectory of the robot asymptotically converges to the physical desired path $\mathcal{P}^{\mathrm{phy}}$.
\end{proof}

\begin{remark}
The condition $\boldsymbol{e}_v^\top\boldsymbol{\dot{\overline{\chi}}}_p\geq0$
is only a sufficient condition for ensuring that the Lyapunov function continues to decrease during acceleration saturation, and is not a necessary condition for closed-loop stability. A temporary violation of this condition does not necessarily imply immediate instability of the closed-loop system. \hfill $\triangleleft$
\end{remark}


Theorem~\ref{thm:002} provides a sufficient condition for the asymptotic convergence of both the path error and the velocity-tracking error in the presence of acceleration saturation. However, it does not guarantee that the reference acceleration
$\|\boldsymbol{\dot{\overline{\chi}}}_p\|$
always satisfies the acceleration constraint $\bar a$. In particular, when the robot lies on the desired path and exactly tracks the vector field $\overline{\boldsymbol{\chi}}_p$, the reference acceleration consists of the tangential acceleration due to speed variation and the normal acceleration induced by the path curvature. An excessively large PPS may therefore cause acceleration saturation, especially on highly curved segments. To address this issue, we next design a curvature-aware PPS that reduces the speed in high-curvature regions while maintaining a high speed on low-curvature regions. To facilitate this design, we impose the following assumption.


\begin{assumption}\label{ass:003}
The function $\boldsymbol{f}(w)$ is $C^3$, and its curvature $\kappa(w)=\|\boldsymbol{f}^{\prime}(w)\times \boldsymbol{f}^{\prime\prime}(w)\|/\|\boldsymbol{f}^{\prime}(w)\|^3$ is continuously differentiable with respect to $w$. Moreover, there exists a constant $L_\kappa\geq0$ such that $|\kappa'(w)| \leq L_\kappa \|\boldsymbol{f}^{\prime}(w)\|$ for all $w \in \mathbb{R}$.
\end{assumption}

Assumption~\ref{ass:003} ensures that the path-curvature variation rate along the arc length is uniformly bounded, which enables the following curvature-aware PPS design.

\begin{theorem}[Curvature-aware PPS]\label{thm:003}
Consider the robot model in \eqref{eq:Second-Order} and the control law in Theorem~\ref{thm:002}. Let a 3D desired path $\mathcal{P}^{\mathrm{phy}} = \{\boldsymbol{f}(w) \in \mathbb{R}^3\mid w \in\mathbb{R}\}$ satisfy Assumptions~\ref{ass:001} and \ref{ass:003}. Choose a desired cruising speed $v_c\in(0,v_s)$ such that
$\bar{a}^2 \geq \frac{2 L_\kappa v_c^4}{3\sqrt{3}}$,
define $C=\frac{L_\kappa v_c^4}{3\sqrt{3}}$ and
$a_n = \sqrt{ \frac{\bar{a}^2 + \sqrt{\bar{a}^4 - 4 C^2}}{2}}$, and set
\begin{equation}\label{eq:adaptive_speed}
v_d(w)
=
\left(
\frac{1}{v_c^4}
+
\frac{\kappa^2(w)}{a_n^2}
\right)^{-\frac{1}{4}}.
\end{equation}
Then, when $\|\boldsymbol{\phi}(t)\|=0$ and $\|\boldsymbol{e}_v(t)\|=0$, it holds that $\|\boldsymbol{\dot{\overline{\chi}}}_p(t)\|\leq\bar{a}$. In this case, $\mathrm{Sat}_{\bar{a}}\left(\boldsymbol{\dot{\overline{\chi}}}_p-k_v\boldsymbol{e}_v\right) = \boldsymbol{\dot{\overline{\chi}}}_p$ and the reference acceleration is saturation-free.
\end{theorem}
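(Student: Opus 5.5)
The plan is to compute $\boldsymbol{\dot{\overline{\chi}}}_p$ explicitly at a state where $\boldsymbol{\phi}=\mathbf{0}$ and $\boldsymbol{e}_v=\mathbf{0}$, split it into tangential and normal parts, and bound both using the specific form \eqref{eq:adaptive_speed}.

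\emph{Reduction to the on-path acceleration.} From \eqref{eq:adaptive_speed}, $v_d(w)\le v_c<v_s$. At $\boldsymbol{\phi}=\mathbf{0}$ we have $\|\widetilde{\boldsymbol{\chi}}_p\|=v_d<v_s$. By continuity, $\mu\equiv1$ in a neighbourhood of the state, so $\overline{\boldsymbol{\chi}}_p=\widetilde{\boldsymbol{\chi}}_p$ there and their derivatives agree. Since $\boldsymbol{e}_v=\mathbf{0}$, we have $\boldsymbol{v}=\widetilde{\boldsymbol{\chi}}_p$, and $\dot w=\widetilde\chi_w=v_d/\|\boldsymbol{f}'\|$. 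Hence $\dot{\boldsymbol{\phi}}=\mathbf{0}$ at this instant by \eqref{eq:error}, so the term $-\boldsymbol{K\phi}$ contributes nothing to $\boldsymbol{\dot{\overline{\chi}}}_p$. Writing $\boldsymbol{T}=\boldsymbol{f}'/\|\boldsymbol{f}'\|$ and $\mathrm{d}s/\mathrm{d}t=\|\boldsymbol{f}'\|\dot w=v_d$, I will obtain
\[
\boldsymbol{\dot{\overline{\chi}}}_p=\frac{\mathrm{d}}{\mathrm{d}t}\bigl(v_d(w)\boldsymbol{T}(w)\bigr)=v_d'(w)\dot w\,\boldsymbol{T}+\kappa v_d^2\boldsymbol{N}.
\]
This uses the Frenet relation $\mathrm{d}\boldsymbol{T}/\mathrm{d}s=\kappa\boldsymbol{N}$, which is valid because $\boldsymbol{f}\in C^3$ (Assumption~\ref{ass:003}). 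By orthogonality of $\boldsymbol{T}$ and $\boldsymbol{N}$, $\|\boldsymbol{\dot{\overline{\chi}}}_p\|^2=a_t^2+a_\perp^2$, where $a_t=|v_d'(w)\dot w|$ and $a_\perp=\kappa v_d^2$. On straight segments with $\kappa=0$, $\boldsymbol{N}$ is undefined, but there the normal term is simply zero and the same bound goes through.

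\emph{Bounding each part.} Differentiating \eqref{eq:adaptive_speed} gives $\mathrm{d}v_d/\mathrm{d}\kappa=-\kappa v_d^5/(2a_n^2)$. Combined with $|\kappa'|\le L_\kappa\|\boldsymbol{f}'\|$ and $\dot w=v_d/\|\boldsymbol{f}'\|$, this yields $a_t\le L_\kappa\kappa v_d^6/(2a_n^2)$. Introduce $u=\kappa^2v_d^4/a_n^2$. The identity $v_d^{-4}=v_c^{-4}+\kappa^2/a_n^2$ gives $u\in[0,1]$ and $v_d^4=(1-u)v_c^4$. Therefore
\[
a_\perp^2=u\,a_n^2\le a_n^2,\qquad a_t\le\frac{L_\kappa v_c^4}{2a_n}\sqrt{u}(1-u)\le\frac{L_\kappa v_c^4}{3\sqrt3\,a_n}=\frac{C}{a_n},
\]
since $\max_{u\in[0,1]}\sqrt u(1-u)=2/(3\sqrt3)$, attained at $u=1/3$.

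\emph{Closing the bound.} Combining the two estimates, $\|\boldsymbol{\dot{\overline{\chi}}}_p\|^2\le a_n^2+C^2/a_n^2$. The hypothesis $\bar a^2\ge 2L_\kappa v_c^4/(3\sqrt3)=2C$ gives $\bar a^4\ge4C^2$, so $a_n$ is real and positive. Moreover, $a_n^2$ is a root of $a^4-\bar a^2a^2+C^2=0$, so $a_n^2+C^2/a_n^2=\bar a^2$ exactly. Hence $\|\boldsymbol{\dot{\overline{\chi}}}_p\|\le\bar a$. Because $\boldsymbol{e}_v=\mathbf{0}$, the saturation acts on $\boldsymbol{\dot{\overline{\chi}}}_p$ alone and returns it unchanged.

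\emph{Main obstacle.} The delicate step is the reduction: one must justify that $\boldsymbol{\dot{\overline{\chi}}}_p$, as defined via partial derivatives in Theorem~\ref{thm:002}, equals the derivative of $v_d\boldsymbol{T}$ along the path. This means checking that the $\boldsymbol{K\phi}$ contributions vanish: $\frac{\partial}{\partial \boldsymbol{p}}(-\boldsymbol{K\phi})\boldsymbol{v}+\frac{\partial}{\partial w}(-\boldsymbol{K\phi})\dot w=-\boldsymbol{K}(\boldsymbol{v}-\boldsymbol{f}'\dot w)=\mathbf{0}$. It also requires that $\mu$ is locally constant, which follows from $v_d\le v_c<v_s$. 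The tangential estimate is then an elementary one-variable maximization once it is written in terms of $u$.
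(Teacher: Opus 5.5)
Your proposal is correct and follows essentially the same route as the paper: you split $\boldsymbol{\dot{\overline{\chi}}}_p$ into the tangential part $\dot v_d\boldsymbol{T}$ and a normal part of magnitude $\kappa v_d^2$, bound them by $C/a_n$ and $a_n$, and close with $a_n^2+C^2/a_n^2=\bar a^2$ from the quadratic that defines $a_n$. The differences are refinements only: your substitution $u=\kappa^2v_d^4/a_n^2\in[0,1]$ replaces the paper's maximization of $g(x)=x/(v_c^{-4}+x^2/a_n^2)^{3/2}$, and your explicit check that $\mu\equiv1$ near the state and that the $\boldsymbol{K\phi}$ terms cancel in the partial-derivative formula makes rigorous a reduction the paper simply asserts.
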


\begin{proof}
Under Assumptions \ref{ass:001} and \ref{ass:003}, the function in \eqref{eq:adaptive_speed} satisfies Definition \ref{def:001} and hence is a valid PPS. When $\|\boldsymbol{\phi}(t)\|=0$ and $\|\boldsymbol{e}_v(t)\|=0$, the robot lies exactly on the desired path and perfectly tracks the vector field
$\overline{\boldsymbol{\chi}}_p$. The actual velocity of the robot is $\boldsymbol{v} = \overline{\boldsymbol{\chi}}_p = v_d(w) \frac{\boldsymbol{f}'(w)}{\|\boldsymbol{f}'(w)\|} = v_d(w) \boldsymbol{T}(w)$, where $\boldsymbol{T}(w)$ is the unit tangent vector. The reference acceleration is $\dot{\boldsymbol{v}}=\boldsymbol{\dot{\overline{\chi}}}_p  = \dot{v}_d \boldsymbol{T} + v_d \dot{\boldsymbol{T}}$. Since $\|\boldsymbol T\|=1$, it follows that
$\boldsymbol T^\top\dot{\boldsymbol T}=0$. Furthermore, let $s$ denote the arc length, such that $\left\Vert{}\frac{d\boldsymbol{T}}{ds}\right\Vert{}=\kappa$. Applying $\dot s=v_d$ yields $\|\dot{\boldsymbol T}\|=\kappa v_d$. Therefore,
\begin{equation}\label{eq:acc_squared_proof}
    \|\boldsymbol{\dot{\overline{\chi}}}_p\|^2 = \dot{v}_d^2 + (\kappa v_d^2)^2.
\end{equation}


\textit{1) Bounded normal acceleration:}
Substituting \eqref{eq:adaptive_speed} into \eqref{eq:acc_squared_proof} gives
\begin{equation}\label{eq:normal_bound}
\scalebox{1.0}{$\displaystyle
(\kappa v_d^2)^2
=
\frac{\kappa^2}{v_c^{-4}+\kappa^2/a_n^2}
=
a_n^2
\frac{\kappa^2/a_n^2}{v_c^{-4}+\kappa^2/a_n^2}
\leq
a_n^2
$}.
\end{equation}

\textit{2) Bounded tangential acceleration:}
Using the chain rule
$\dot v_d=v_d'(w)\dot w$
and
$|\kappa'(w)|
\leq
L_\kappa\|\boldsymbol{f}'(w)\|$
from Assumption~\ref{ass:003}, we obtain
\begin{align}
\scalebox{1.0}{$\displaystyle |\dot{v}_d|$}
&\scalebox{1.0}{$\displaystyle
= \frac{|\kappa\kappa'|}{2a_n^2\|\boldsymbol{f}'(w)\|}
\left(
v_c^{-4}+\frac{\kappa^2}{a_n^2}
\right)^{-\frac{3}{2}}
\leq
\frac{C}{a_n}.$}
\label{eq:tangential_bound}
\end{align}
Here, we have used the fact that the function $g(x) = x / (v_c^{-4} + x^2/a_n^2)^{3/2}$ attains its maximum $\frac{2 a_n v_c^4}{3\sqrt{3}}$ at
$x = \frac{a_n}{\sqrt{2}v_c^2}$.

\textit{3) Bounded total acceleration:}
Note that $a_n^2$ is a real root of $x^2 - \bar{a}^2 x + C^2 = 0$, whose existence is guaranteed by $\bar{a}^2\geq 2C$. Combining
\eqref{eq:normal_bound} and \eqref{eq:tangential_bound} with
\eqref{eq:acc_squared_proof} gives $\|\boldsymbol{\dot{\overline{\chi}}}_p\|^2 \leq \frac{C^2}{a_n^2} + a_n^2 =\bar{a}^2$. Therefore, $\Vert{}\boldsymbol{\dot{\overline{\chi}}}_p\Vert{} \leq \bar{a}$.
\end{proof}


\section{Simulation Comparison and Experiments}

\subsection{Simulation Comparison}
\begin{figure*}[!t]
    \centering
    \subfloat[]{
        \includegraphics[width=0.23\linewidth]{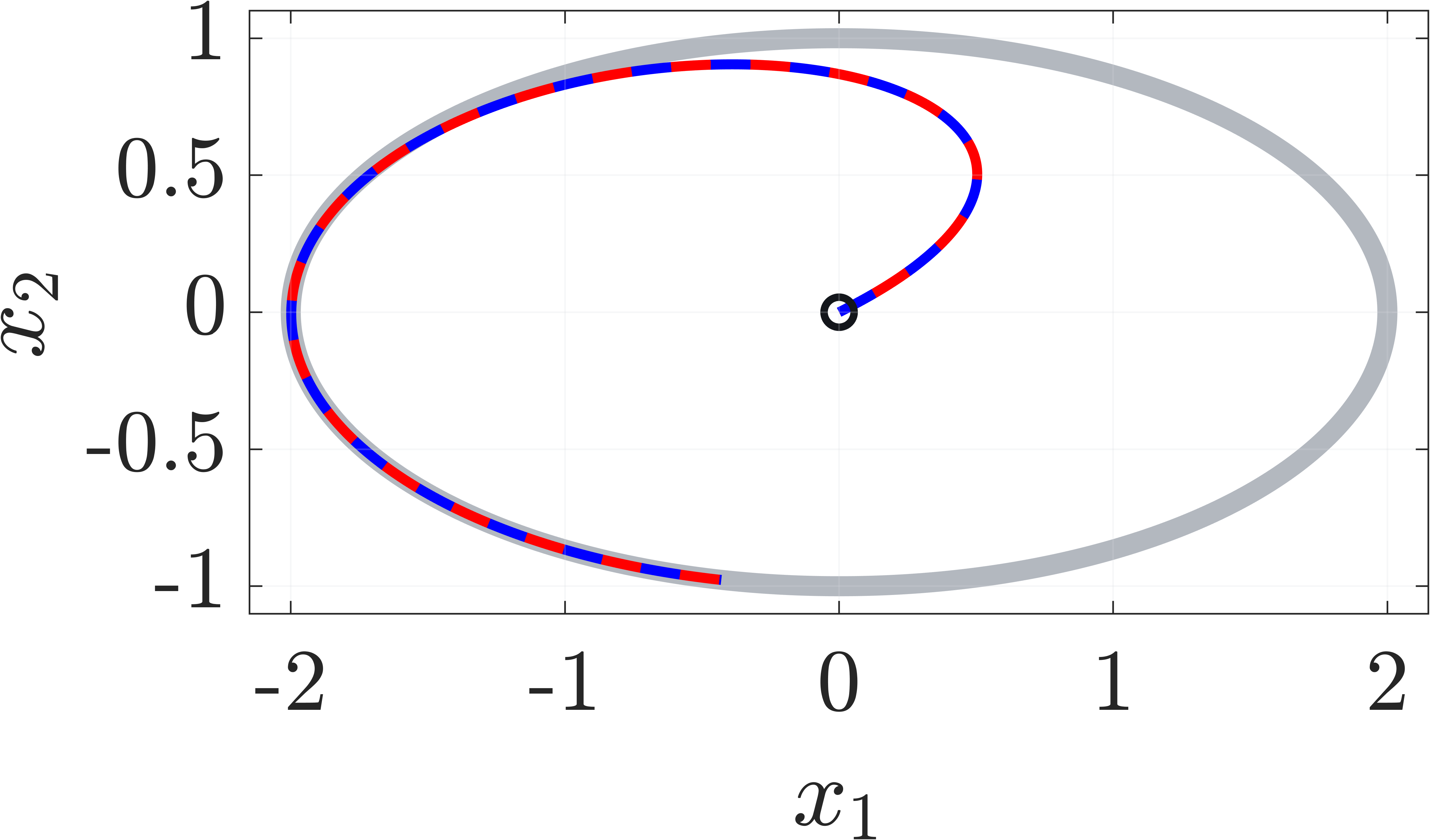}
        \label{fig:004a}
    }
    \subfloat[]{
        \includegraphics[width=0.23\linewidth]{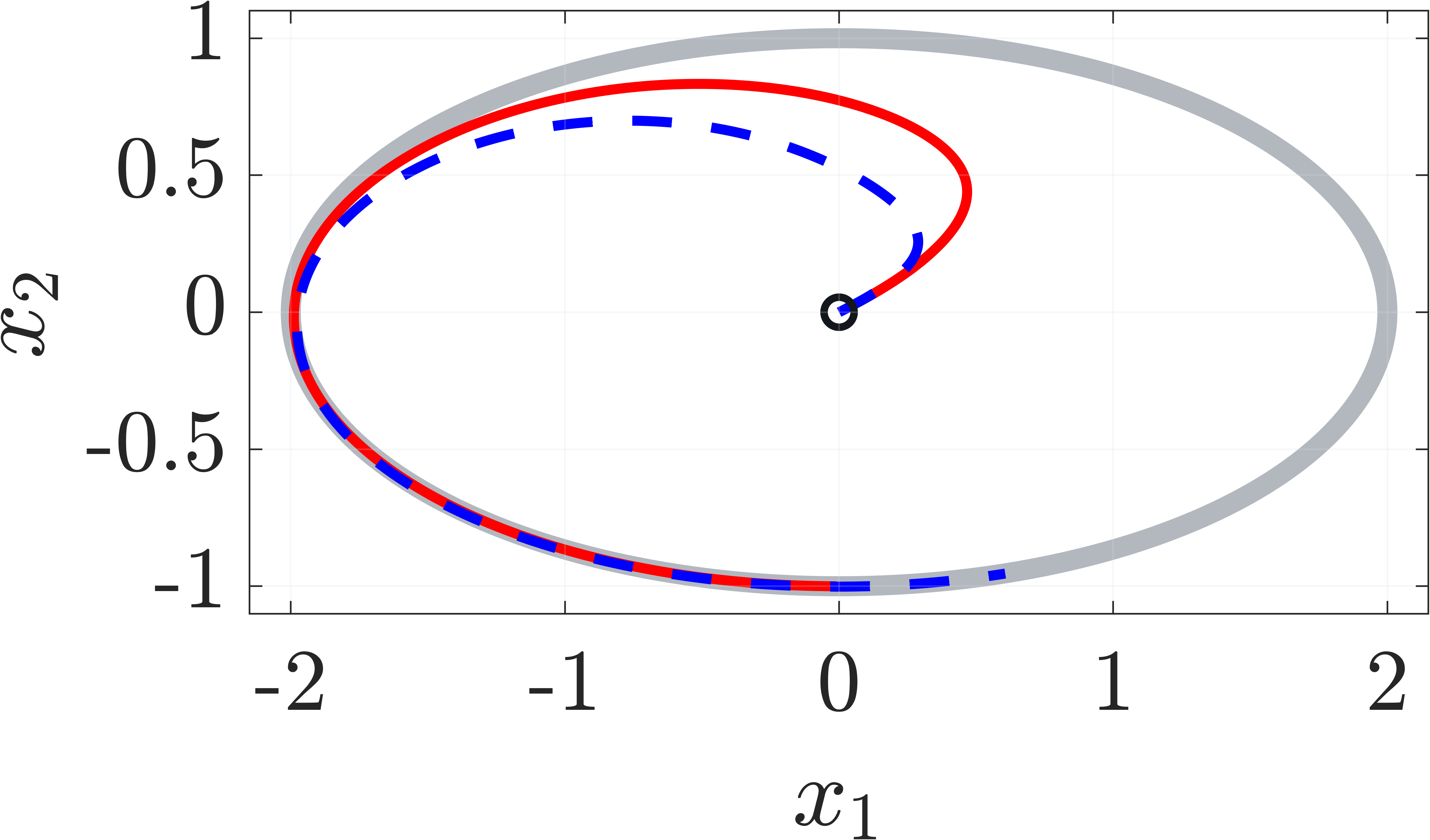}
        \label{fig:004b}
    }
    \subfloat[]{
        \includegraphics[width=0.23\linewidth]{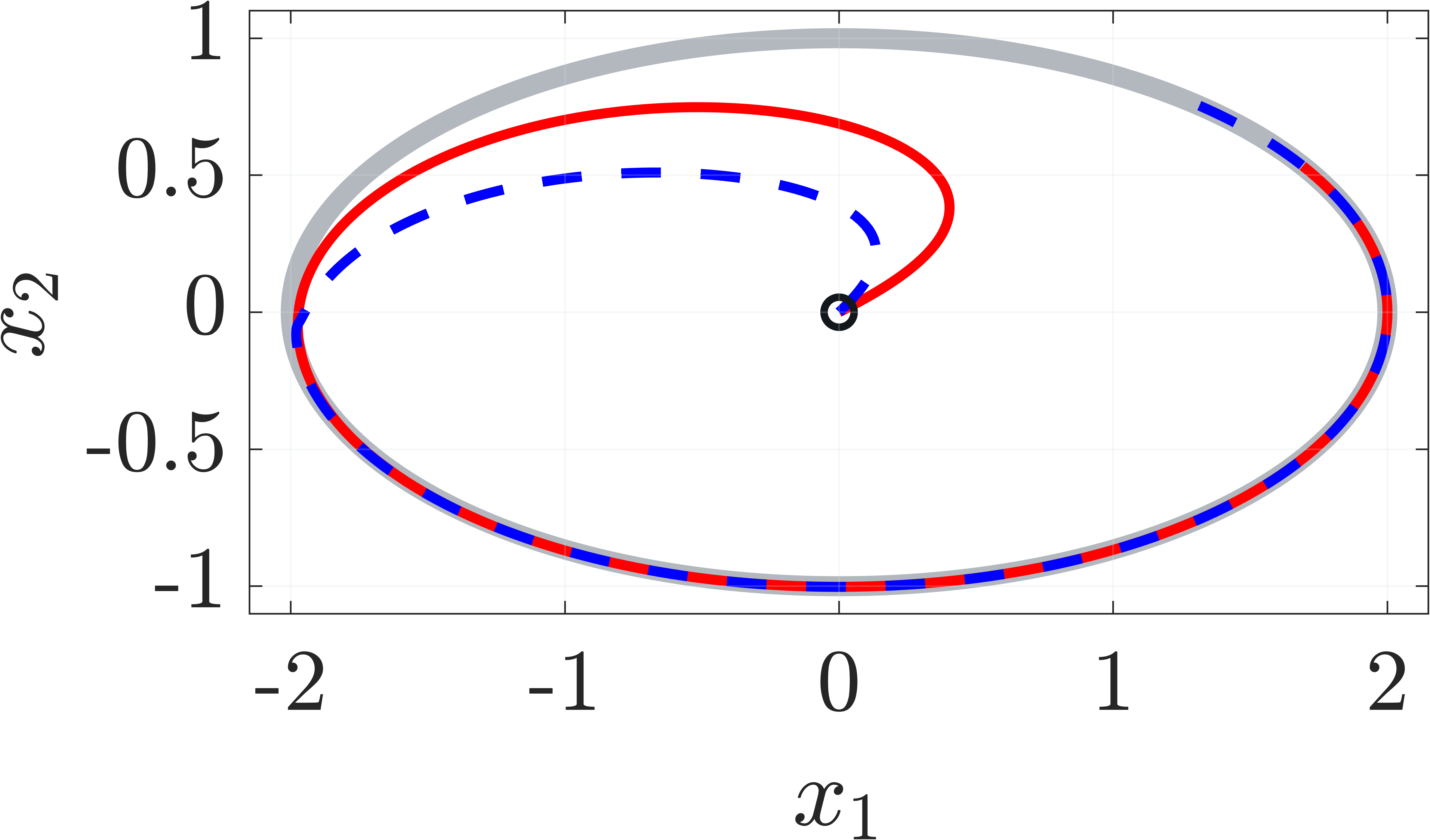}
        \label{fig:004c}
    }
    \subfloat[]{
        \includegraphics[width=0.23\linewidth]{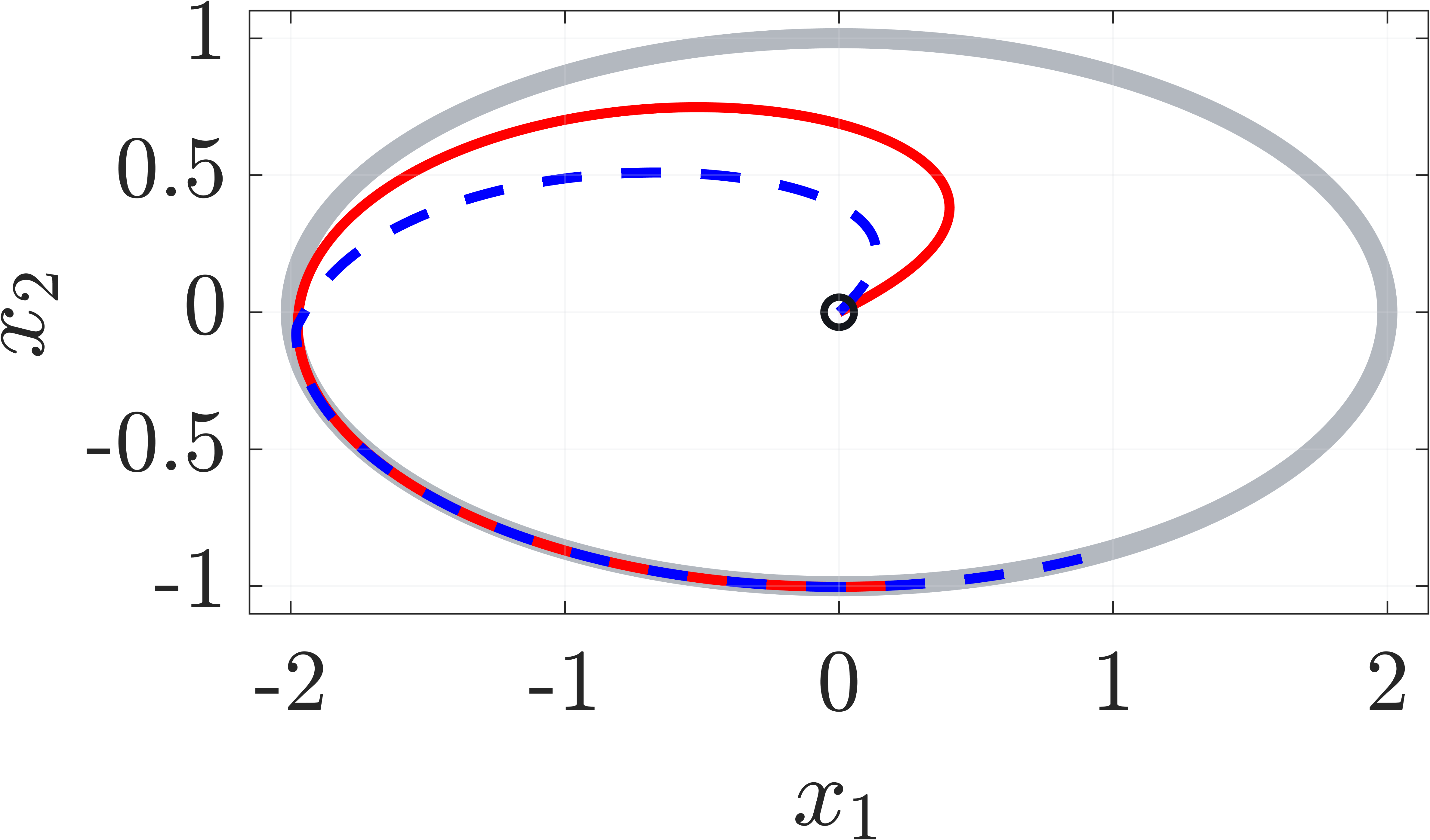}
        \label{fig:004d}
    }
    
    \subfloat[]{
        \includegraphics[width=0.3\linewidth]{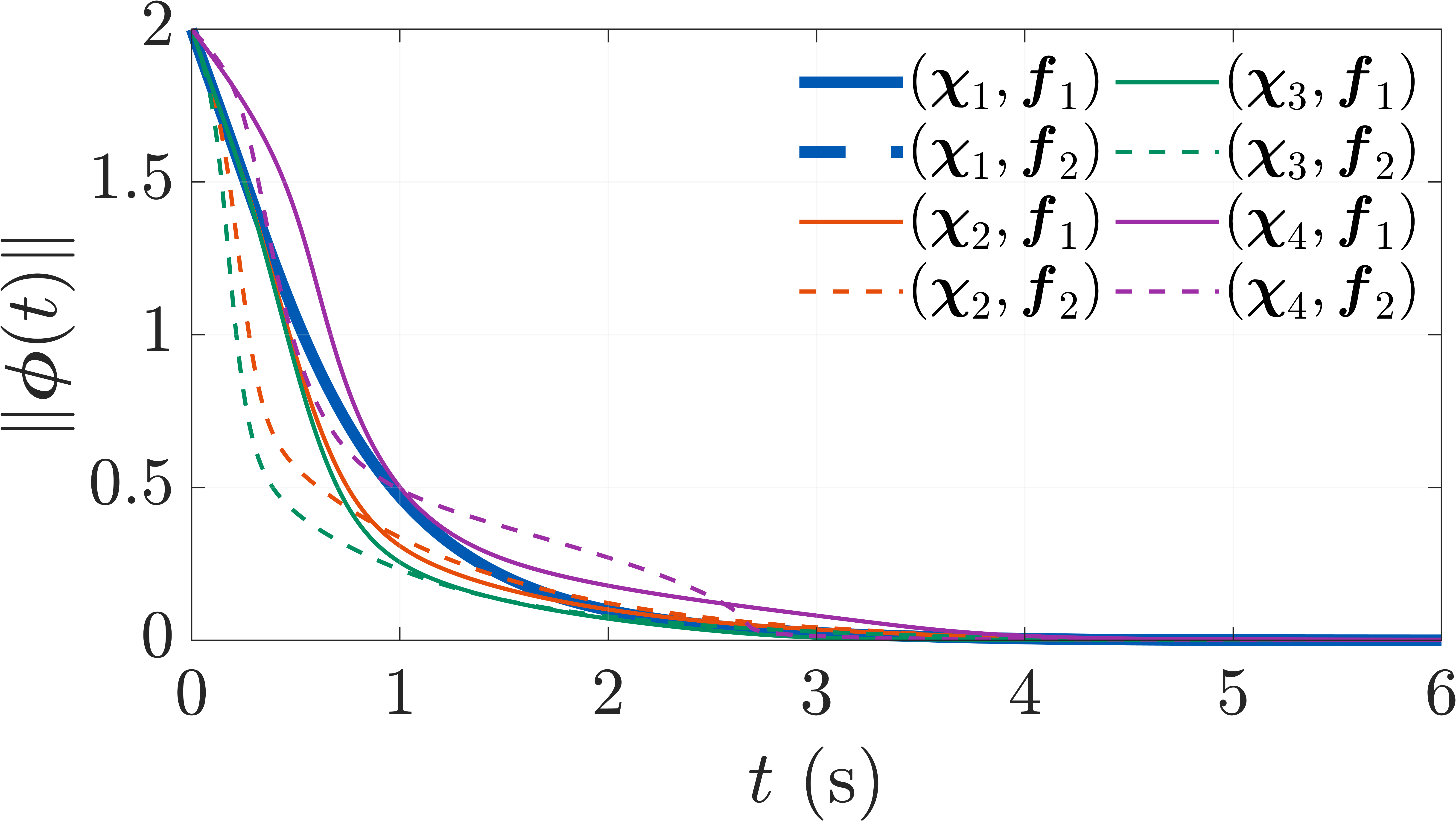}
        \label{fig:004e}
    }
    \subfloat[]{
        \includegraphics[width=0.3\linewidth]{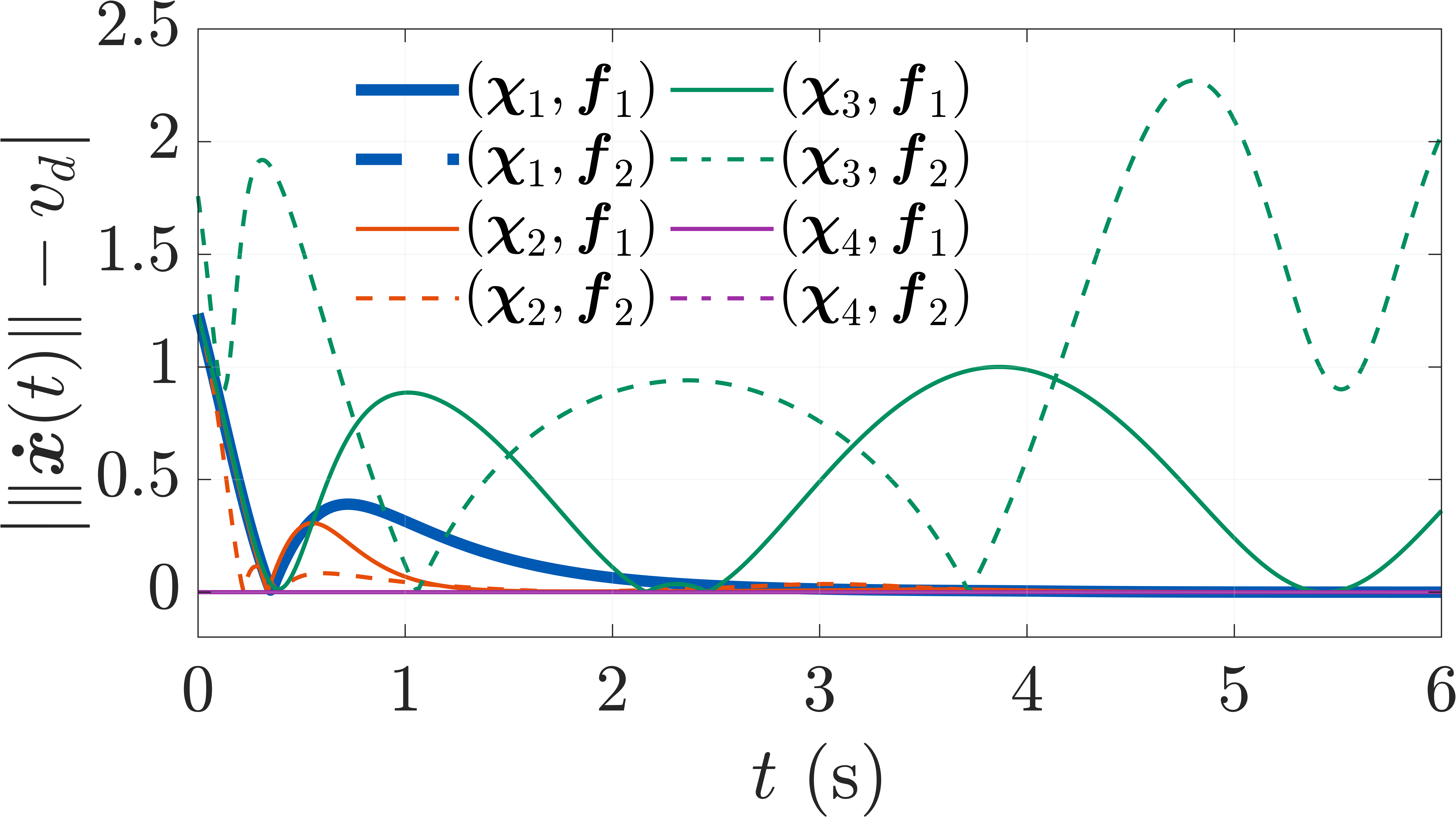}
        \label{fig:004f}
    }
    \subfloat[]{
        \includegraphics[width=0.3\linewidth]{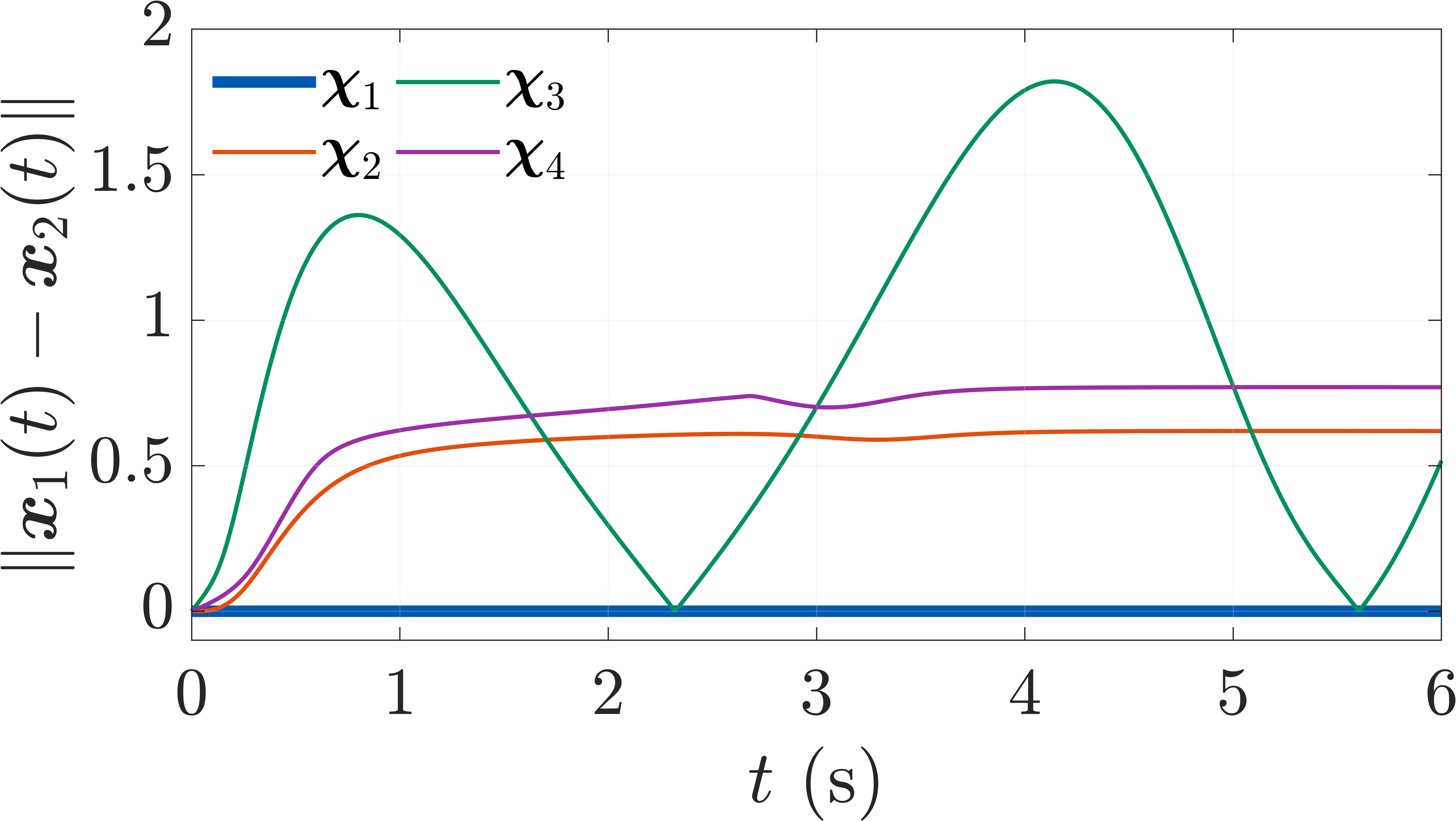}
        \label{fig:004g}
    }
    \caption{Results of the simulation comparison for two different regular parameterizations of the same ellipse. (a)–(d) correspond to $\boldsymbol{\chi}_1$–$\boldsymbol{\chi}_4$, respectively, where the thick gray curve denotes the desired ellipse, and the red solid and blue dashed curves denote the trajectories generated under the parameterizations $\boldsymbol{f}_1$ and $\boldsymbol{f}_2$, respectively. (e)–(g) show the path error $\|\boldsymbol{\phi}(t)\|$, the physical-speed error $\big|\|\dot{\boldsymbol{x}}(t)\|-v_d(w(t))\big|$, and the Euclidean distance $\|\boldsymbol{x}_1(t)-\boldsymbol{x}_2(t)\|$ between the physical trajectories generated by the two parameterizations, respectively.}
    \label{fig:004}
\end{figure*}

To verify the invariance of the path-error dynamics under different parameterizations stated in Corollary~\ref{cor:001} and to examine the role of the matrix $\boldsymbol{M}(w)$, we compare the proposed vector field with three benchmark vector fields. For convenience, the four vector fields are denoted by $\boldsymbol{\chi}_i$, $i=1,2,3,4$. Specifically, let $\boldsymbol{\chi}_1=\widetilde{\boldsymbol{\chi}}^{\mathrm{hgh}}$, where $\widetilde{\boldsymbol{\chi}}^{\mathrm{hgh}}$ is given by \eqref{eq:proposed_vector_field}. Let $\boldsymbol{\chi}_2$ be an ablated version of $\boldsymbol{\chi}_1$ obtained by replacing $\boldsymbol{M}(w)$ with the identity matrix $\boldsymbol{I}_{n+1}$, which is used to evaluate the role of $\boldsymbol{M}(w)$ in reparameterization invariance. Furthermore, let $\boldsymbol{\chi}_3=\boldsymbol{\chi}^{\mathrm{hgh}}$ and $\boldsymbol{\chi}_4=\widehat{\boldsymbol{\chi}}^{\mathrm{hgh}}$, corresponding to the SF-GVF in \eqref{eq:high_vector_field} and the partially normalized SF-GVF in \eqref{eq:partone}, respectively.

\begin{table}[!h]
    \centering
    \scriptsize
    \caption{Comparison of the four SF-GVFs.}
    \label{tab:comparison_gvf}
    \setlength{\tabcolsep}{2.0pt}
    \renewcommand{\arraystretch}{1.05}
    \begin{tabular}{c c c c c}
        \toprule
        \multirow{2}{*}{\textbf{SF-GVF}}
        & \textbf{global}
        & \textbf{physical speed}
        & \textbf{globally}
        & \textbf{reparam.-invariant}
        \\
        & \textbf{convergence}
        & \textbf{regulation}
        & \textbf{well-defined}
        & \textbf{path-error dynamics}
        \\
        \midrule
        $\boldsymbol{\chi}_1$
        & \cmark
        & \cmark
        & \cmark
        & \cmark
        \\

        $\boldsymbol{\chi}_2$
        & \cmark
        & \cmark
        & \cmark
        & \xmark
        \\

        $\boldsymbol{\chi}_3$
        & \cmark
        & \xmark
        & \cmark
        & \xmark
        \\

        $\boldsymbol{\chi}_4$
        & \xmark
        & \cmark
        & \xmark
        & \xmark
        \\
        \bottomrule
    \end{tabular}

\end{table}

Consider a 2D elliptical path $\mathcal{P}^{\mathrm{phy}}$ with the following two different parameterizations:
\begin{equation*}
\scalebox{1.0}{$\displaystyle
\begin{aligned}
\boldsymbol{f}_1(w)&=
\begin{bmatrix}
2\cos w \\ \sin w
\end{bmatrix}, 
&
\boldsymbol{f}_2(w)&=
\begin{bmatrix}
2\cos(w+0.9\sin w) \\ \sin(w+0.9\sin w)
\end{bmatrix}.
\end{aligned}
$}
\end{equation*}
Clearly, $\boldsymbol{f}_1$ and $\boldsymbol{f}_2$ describe the same geometric ellipse but with different parameterizations. The same parameters and initial condition are used for all four vector fields. Specifically, we choose
$\boldsymbol{K}=\boldsymbol{I}_2$,
$\boldsymbol{\xi}(0)=\begin{bmatrix}0,0,0\end{bmatrix}^{\top}$,
and $v_d(w)\equiv1$. The simulation time is $6~\mathrm{s}$.

The results are shown in Fig.~\ref{fig:004}. Under the two different parameterizations, the physical trajectories generated by $\boldsymbol{\chi}_1$ coincide exactly, and the physical speed eventually converges to the PPS. The properties of the four guiding vector fields are summarized in Table~\ref{tab:comparison_gvf}. In particular, the comparison between $\boldsymbol{\chi}_1$ and $\boldsymbol{\chi}_2$ shows that the matrix $\boldsymbol{M}(w)$ plays a key role in achieving reparameterization-invariant path-error dynamics. The results of $\boldsymbol{\chi}_3$ show that the conventional SF-GVF itself cannot independently regulate the physical speed. Although $\boldsymbol{\chi}_4$ can regulate the physical speed within its domain, its normalization denominator may vanish, rendering the vector field undefined, which is consistent with our previous analysis in Sections \ref{sec:002}--\ref{sec:003}.

\subsection{Experiment 1: Path Following with Constant PPS}\label{sec:exp1}

\begin{figure}[!t]
    \centering
    \subfloat[]{
        \includegraphics[width=0.45\linewidth]{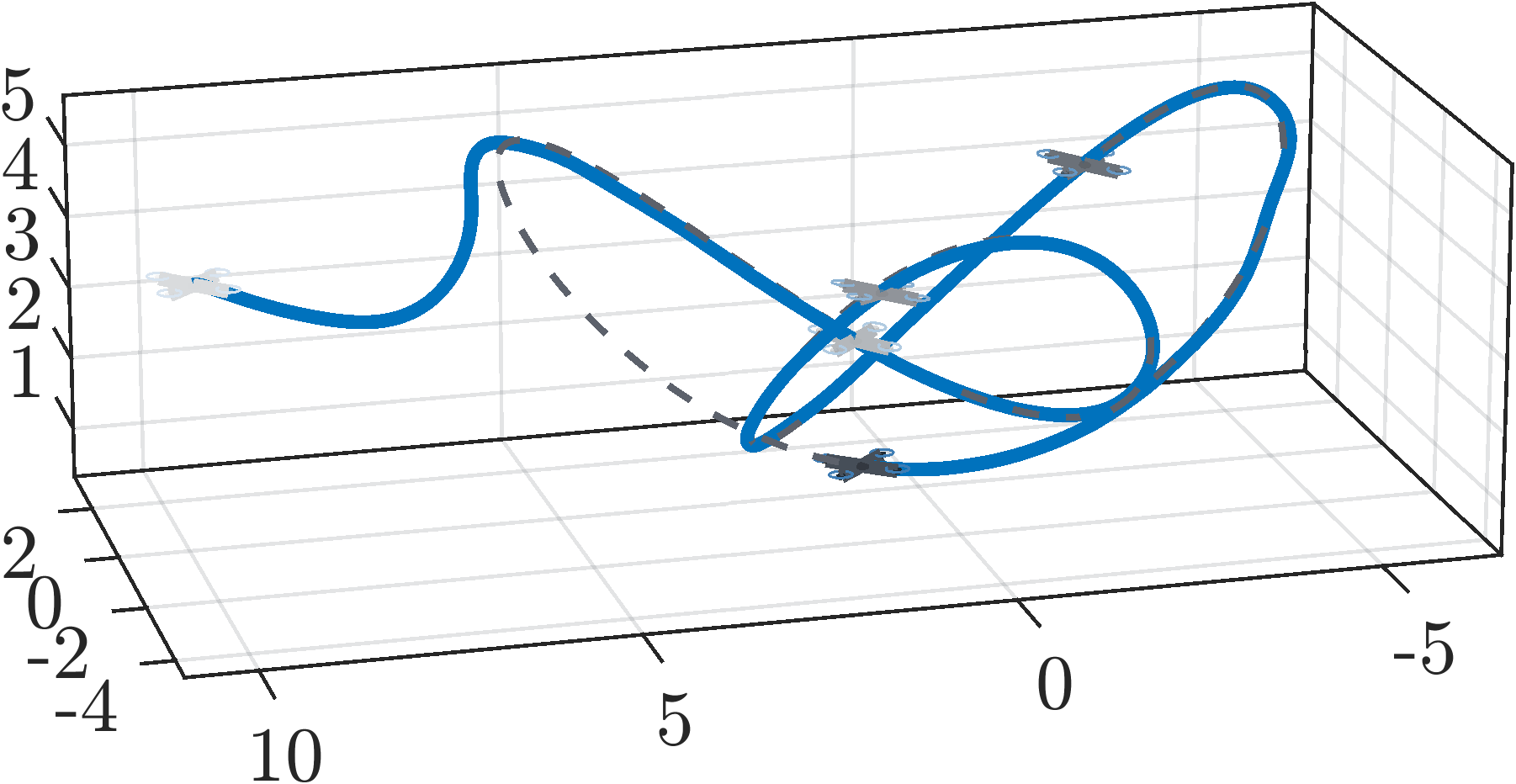}
        \label{fig:006a}
    }
    \subfloat[]{
        \includegraphics[width=0.45\linewidth]{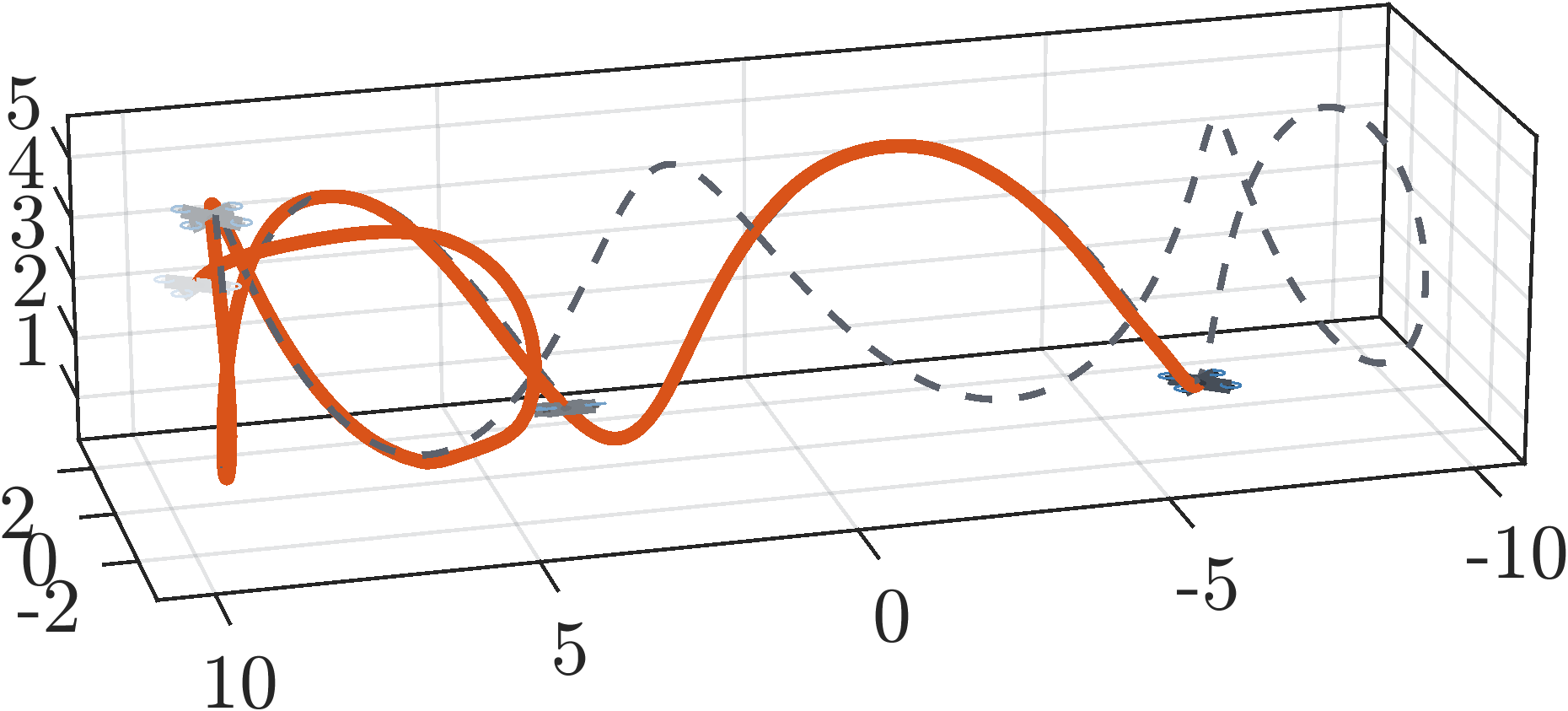}
        \label{fig:006b}
    }

    \subfloat[]{
        \includegraphics[width=0.45\linewidth]{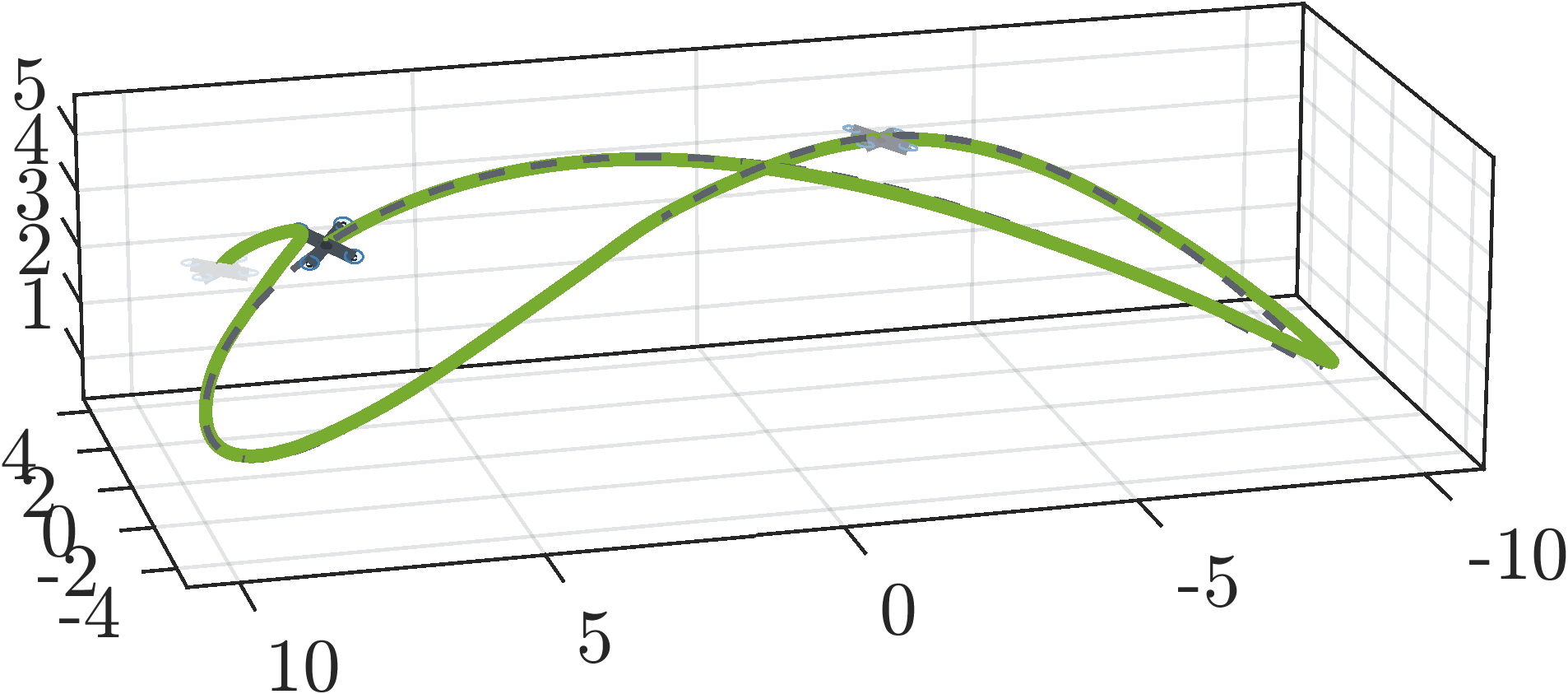}
        \label{fig:006c}
    }
    \subfloat[]{
        \includegraphics[width=0.45\linewidth]{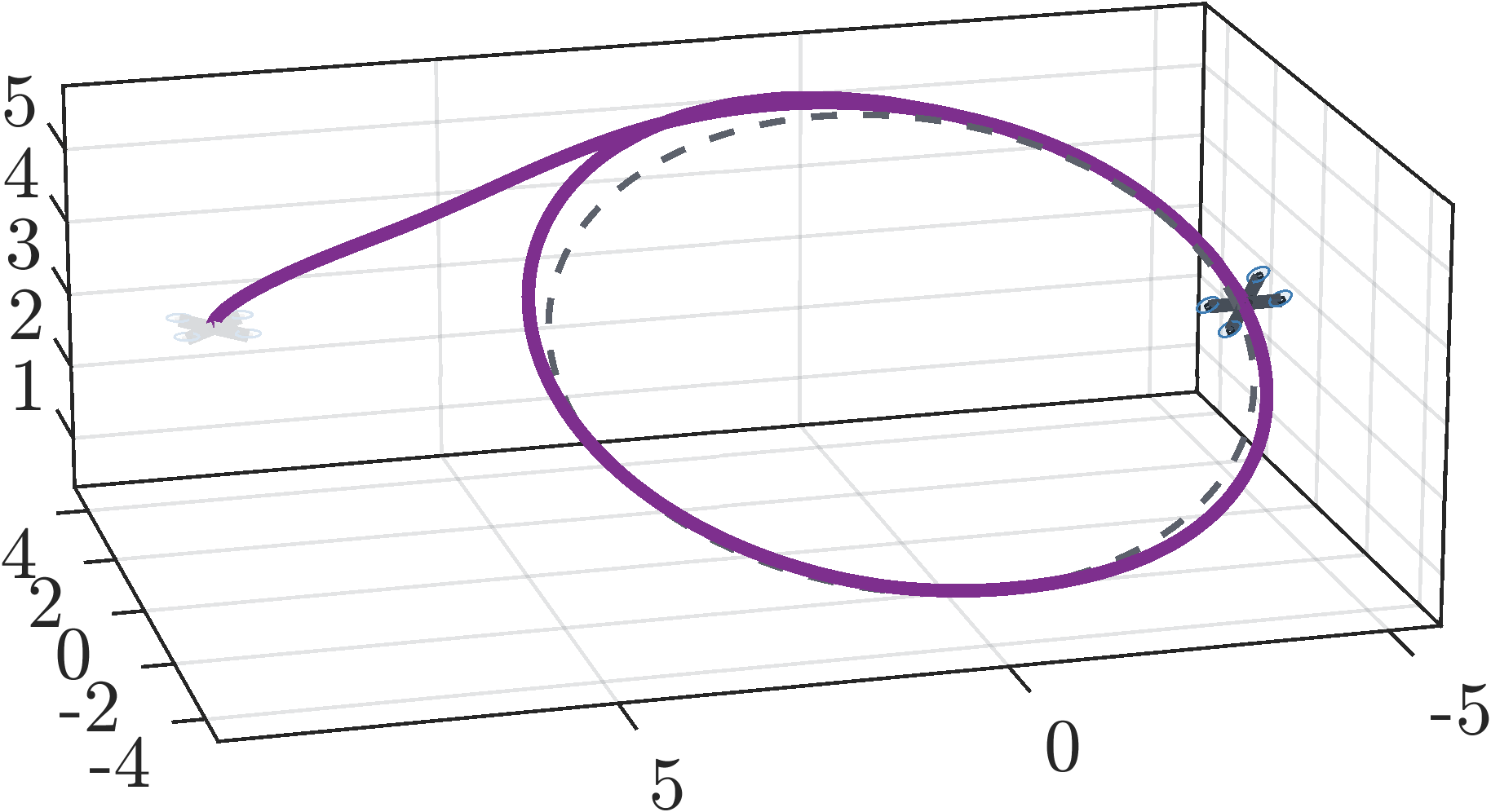}
        \label{fig:006d}
    }
    \caption{The flight trajectories of the quadrotor in Experiment 1. (a)–(d) correspond to the desired path parameterizations $\boldsymbol{f}_1$--$\boldsymbol{f}_4$, with flight times of $30\,\mathrm{s}$, $30\,\mathrm{s}$, $15\,\mathrm{s}$, and $10\,\mathrm{s}$, respectively. The black dashed curves denote the desired paths, and the solid curves denote the actual trajectories.}
    \label{fig:006}
\end{figure}

\begin{figure}[!t]
    \centering
    \subfloat[]{
        \includegraphics[width=0.45\linewidth]{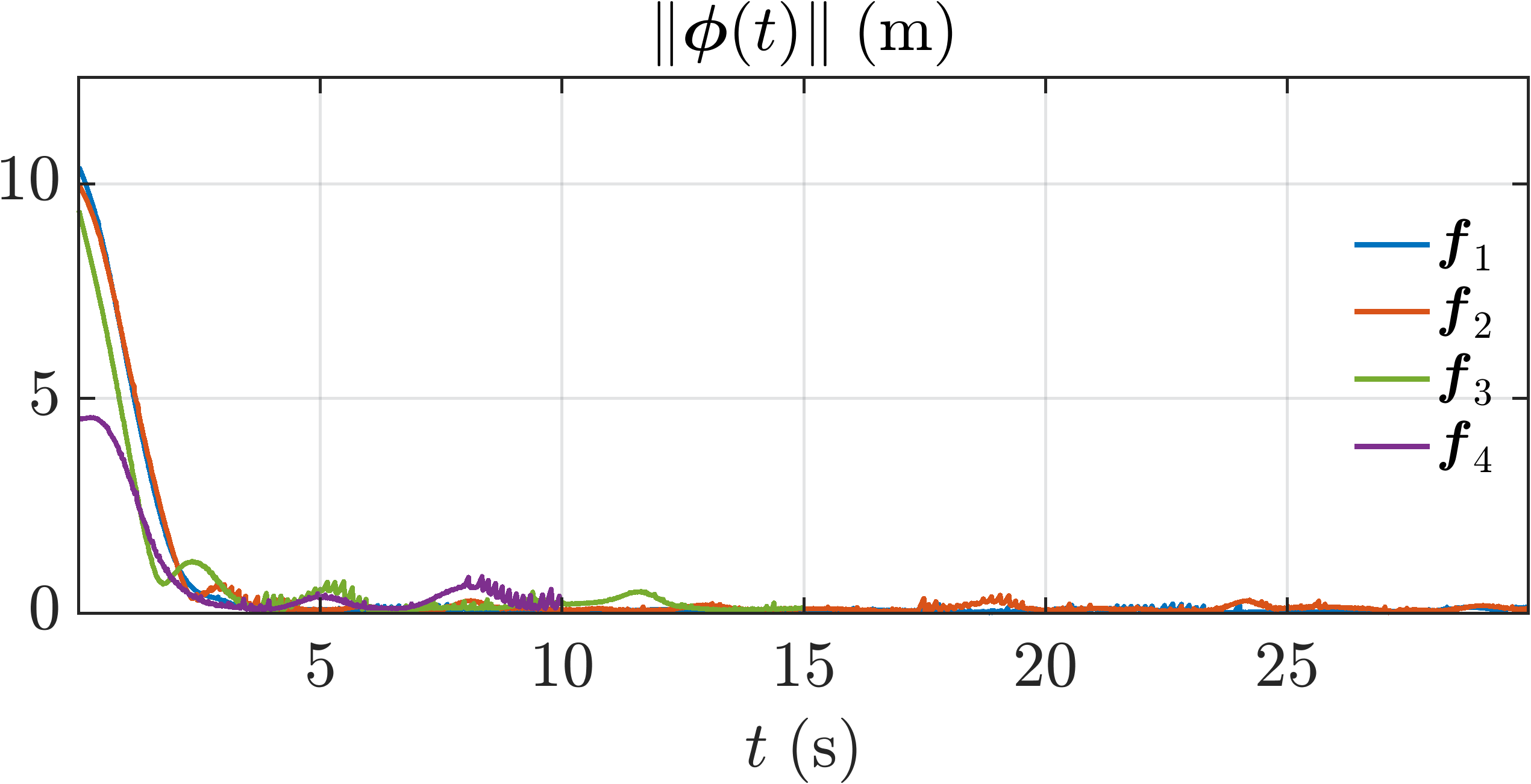}
        \label{fig:007a}
    }
    \subfloat[]{
        \includegraphics[width=0.45\linewidth]{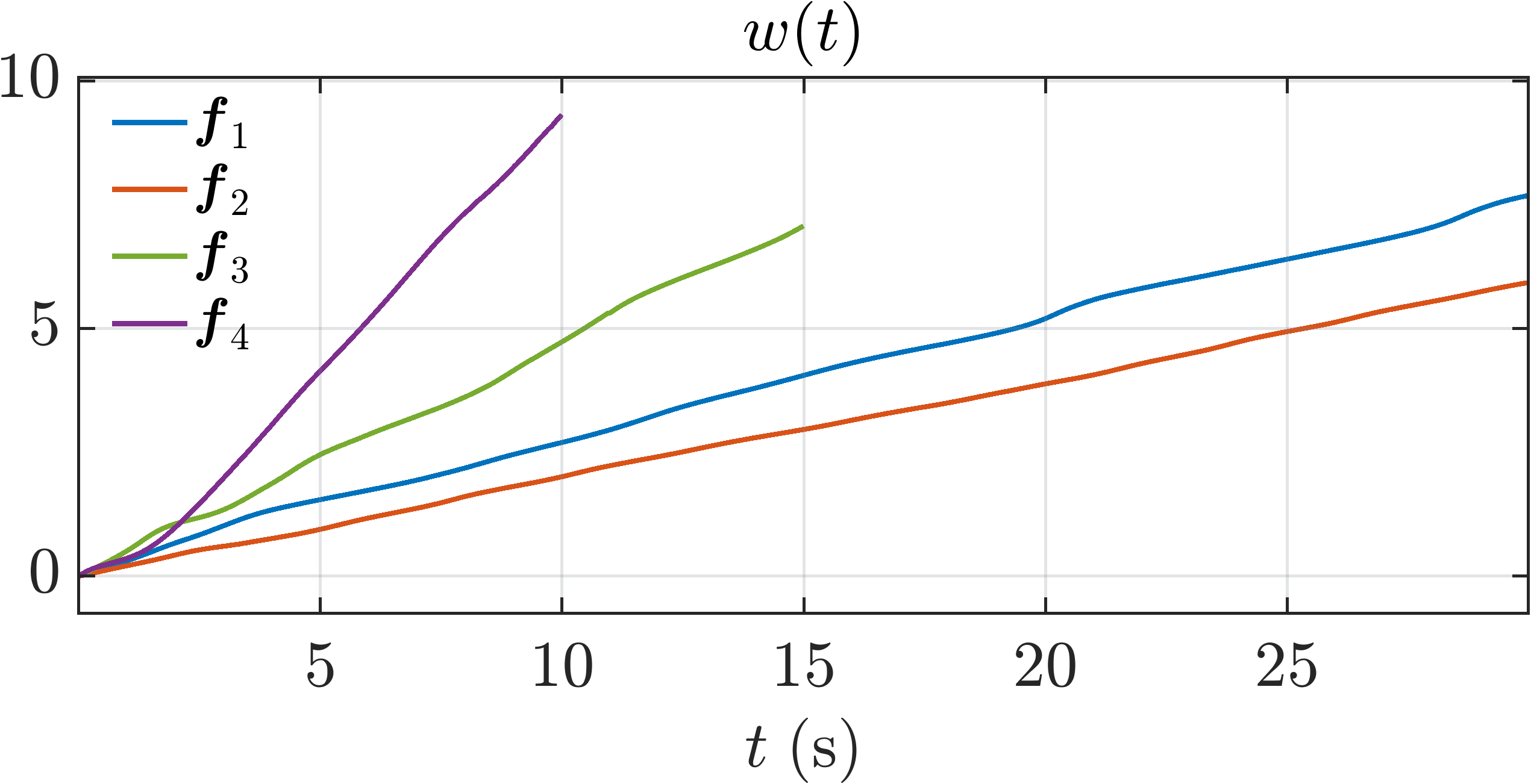}
        \label{fig:007b}
    }

    \subfloat[]{
        \includegraphics[width=0.45\linewidth]{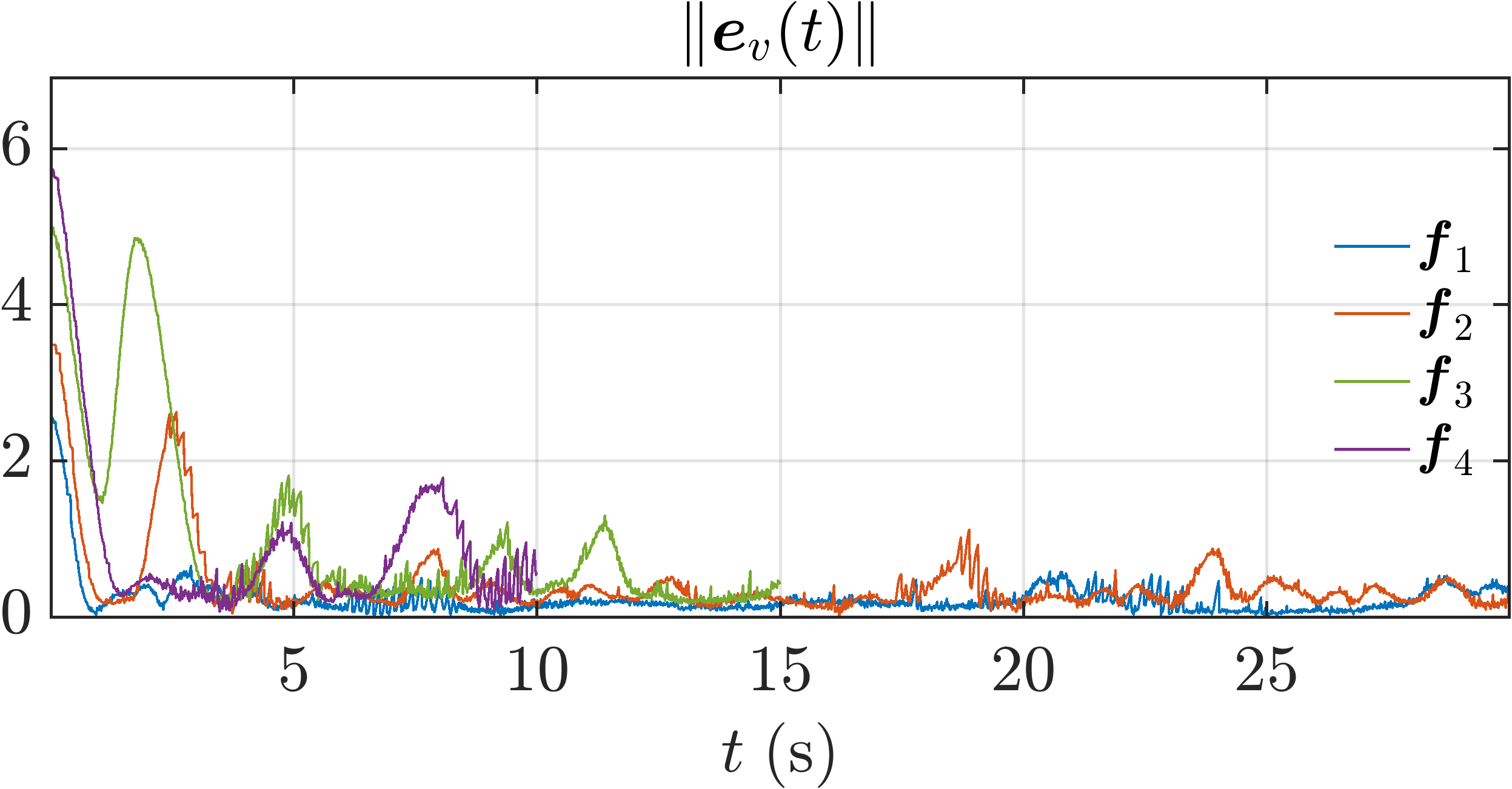}
        \label{fig:007c}
    }
    \subfloat[]{
        \includegraphics[width=0.45\linewidth]{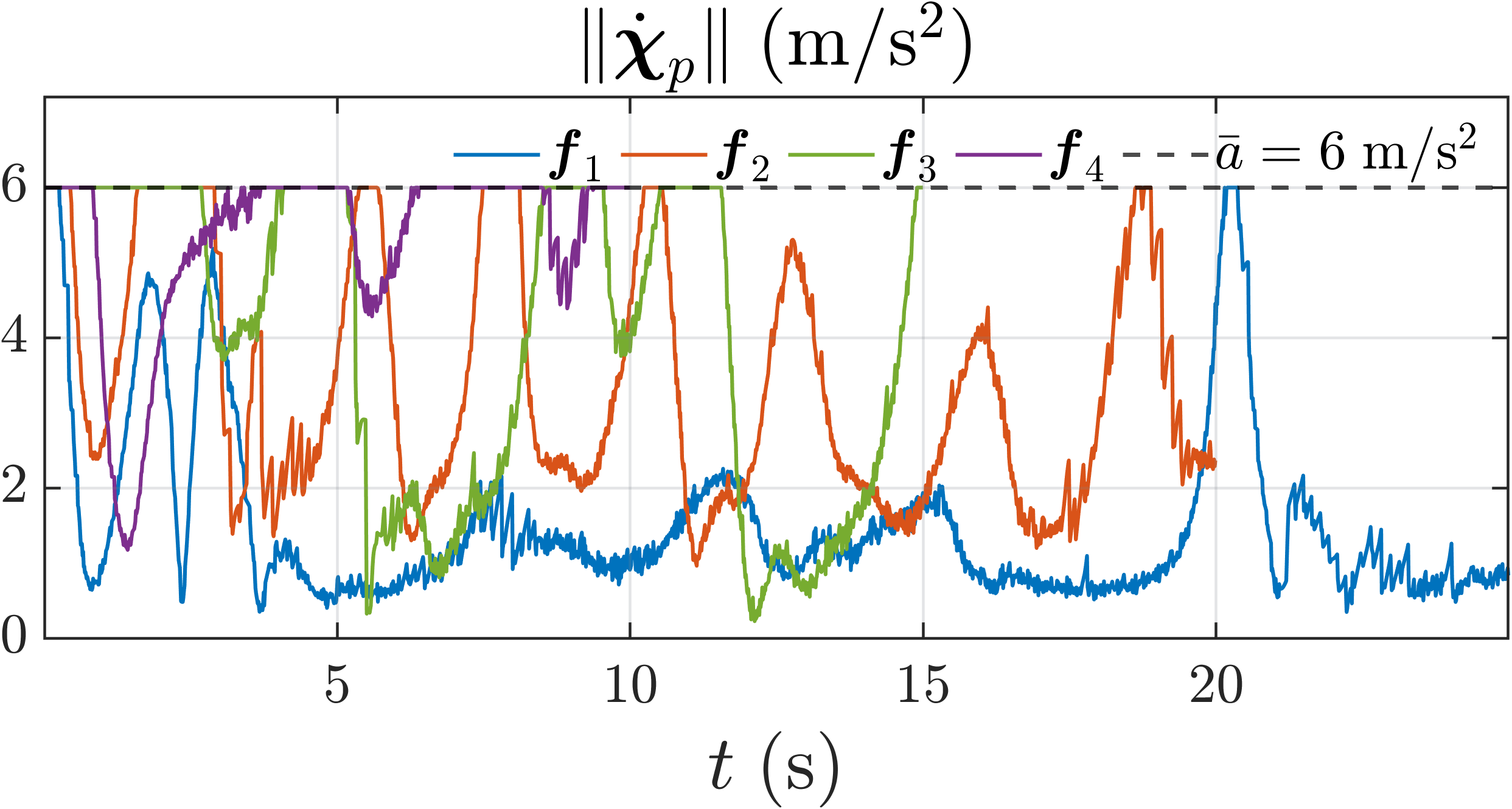}
        \label{fig:007d}
    }
    \caption{Results of the first experiment. (a)–(d) show the path error $\|\boldsymbol{\phi}(t)\|$, path parameter $w(t)$, velocity-tracking error $\|\boldsymbol{e}_v(t)\|$, and reference acceleration $\|\boldsymbol{\dot{\chi}}_p\|$, respectively, for the four desired paths.}
    \label{fig:007}
\end{figure}

\begin{figure*}[!t]
    \centering
    \subfloat[]{
        \includegraphics[width=0.23\linewidth]{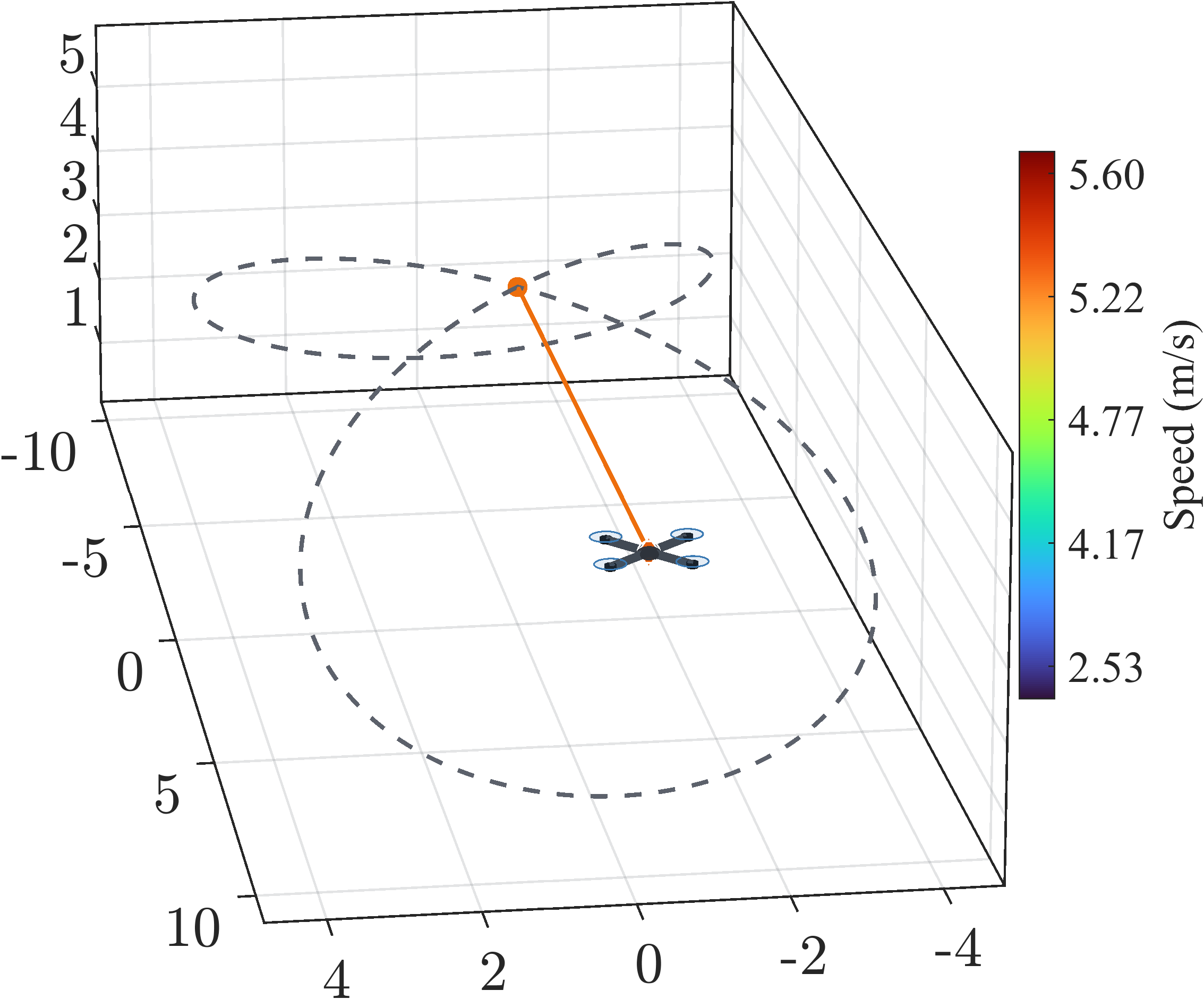}
        \label{fig:008a}
    }
    \subfloat[]{
        \includegraphics[width=0.23\linewidth]{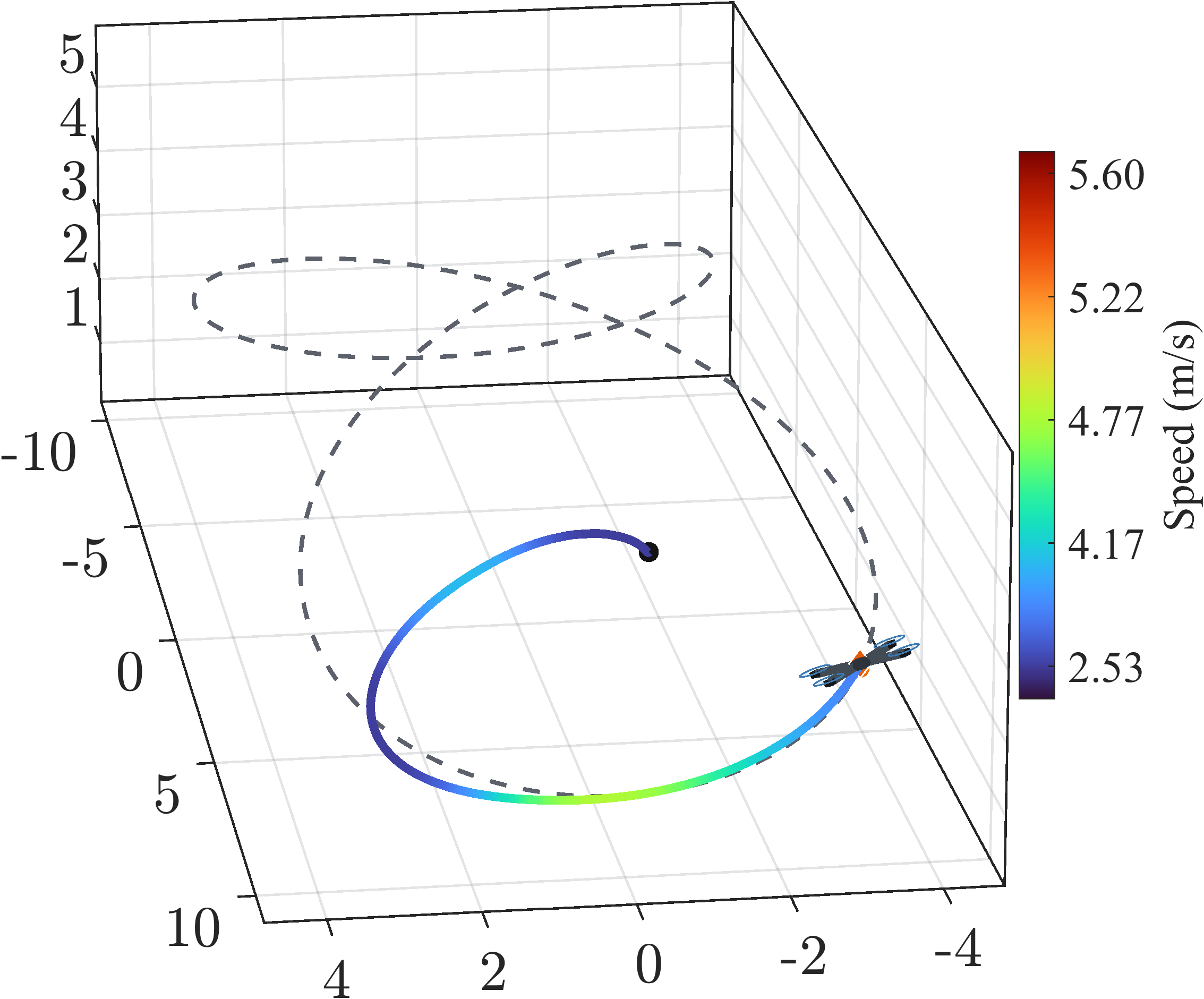}
        \label{fig:008b}
    }
    \subfloat[]{
        \includegraphics[width=0.23\linewidth]{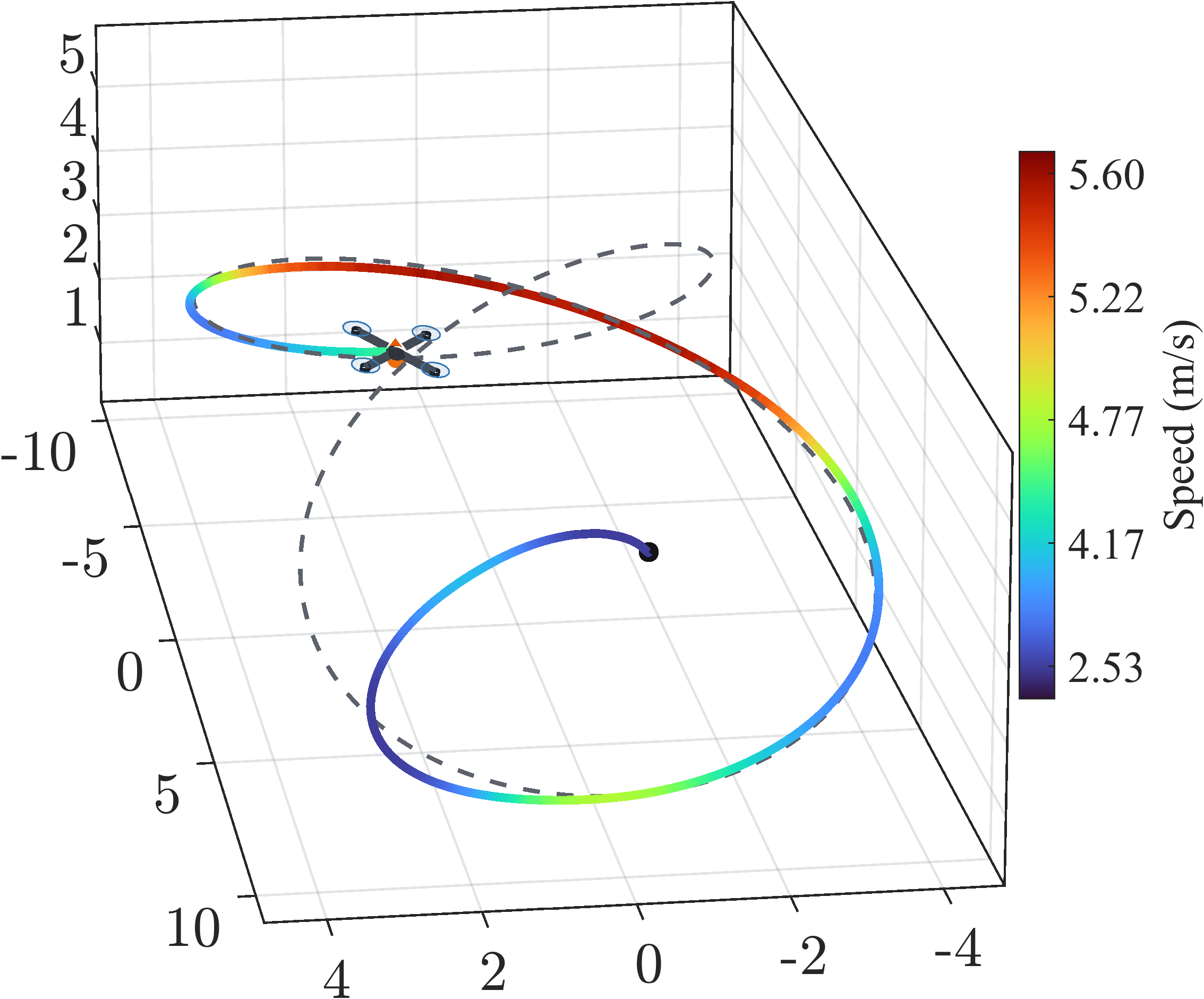}
        \label{fig:008c}
    }
    \subfloat[]{
        \includegraphics[width=0.23\linewidth]{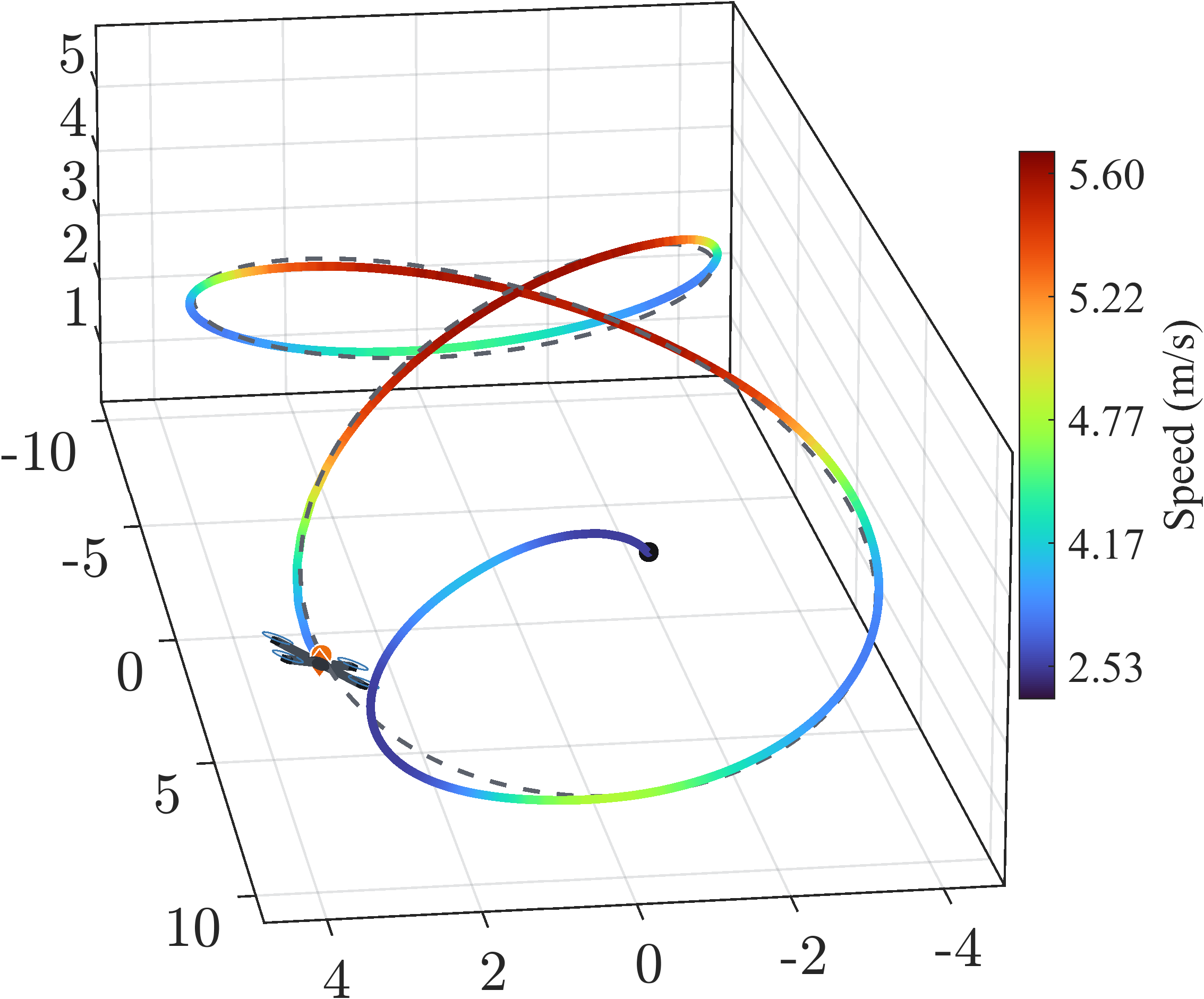}
        \label{fig:008d}
    }
    
    \subfloat[]{
        \includegraphics[width=0.30\linewidth]{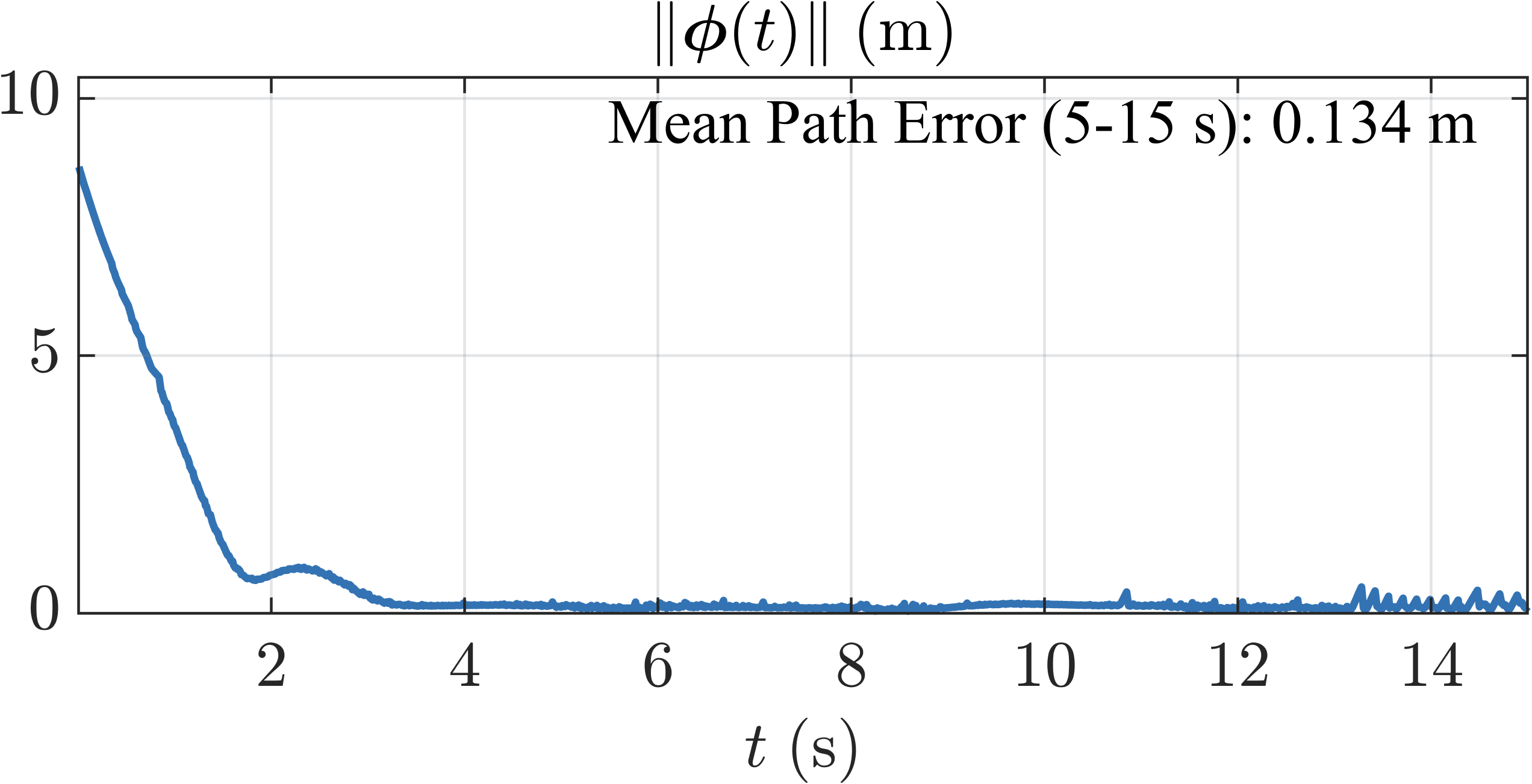}
        \label{fig:008e}
    }
    \subfloat[]{
        \includegraphics[width=0.30\linewidth]{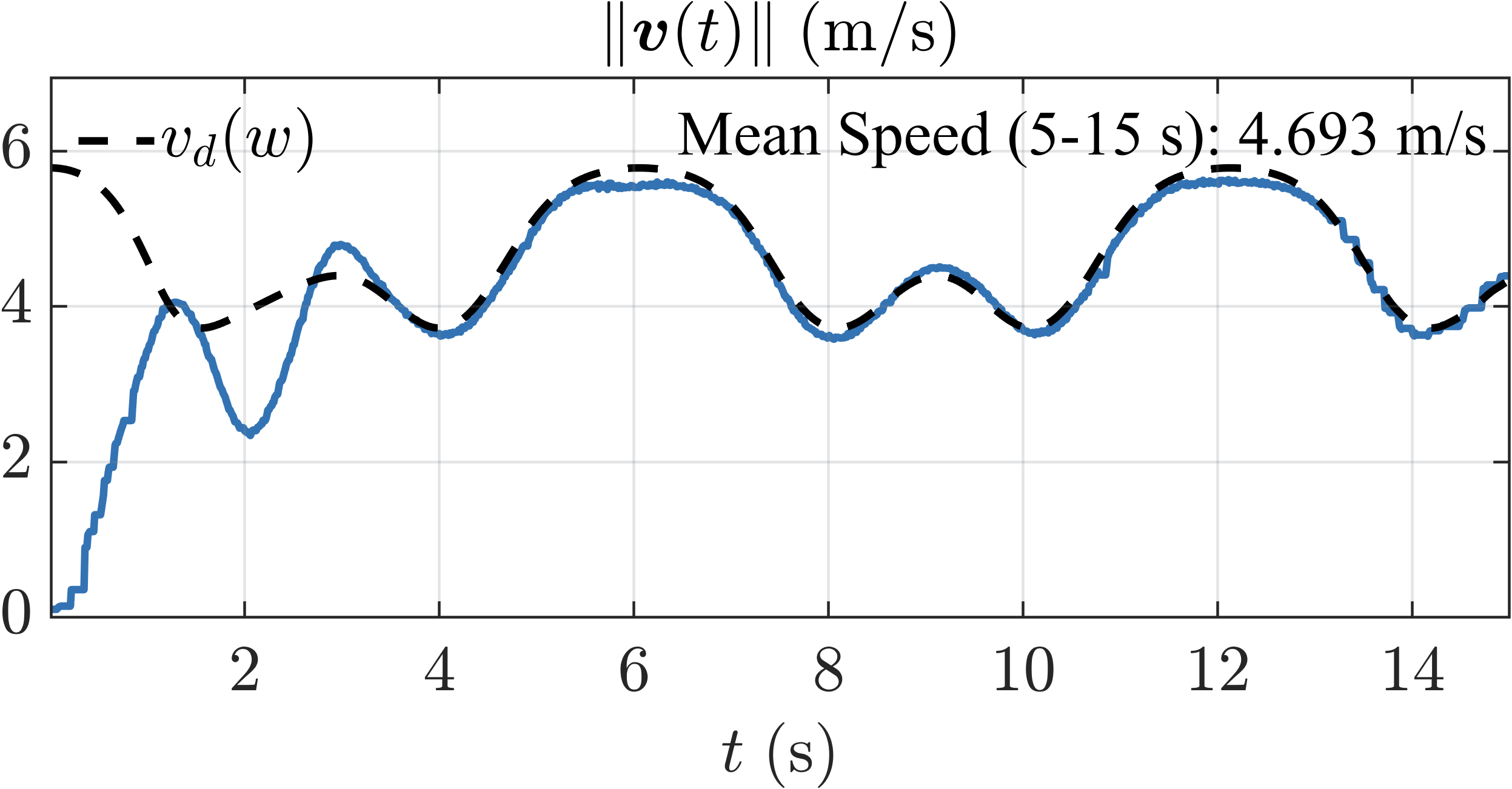}
        \label{fig:008f}
    }
    \subfloat[]{
        \includegraphics[width=0.30\linewidth]{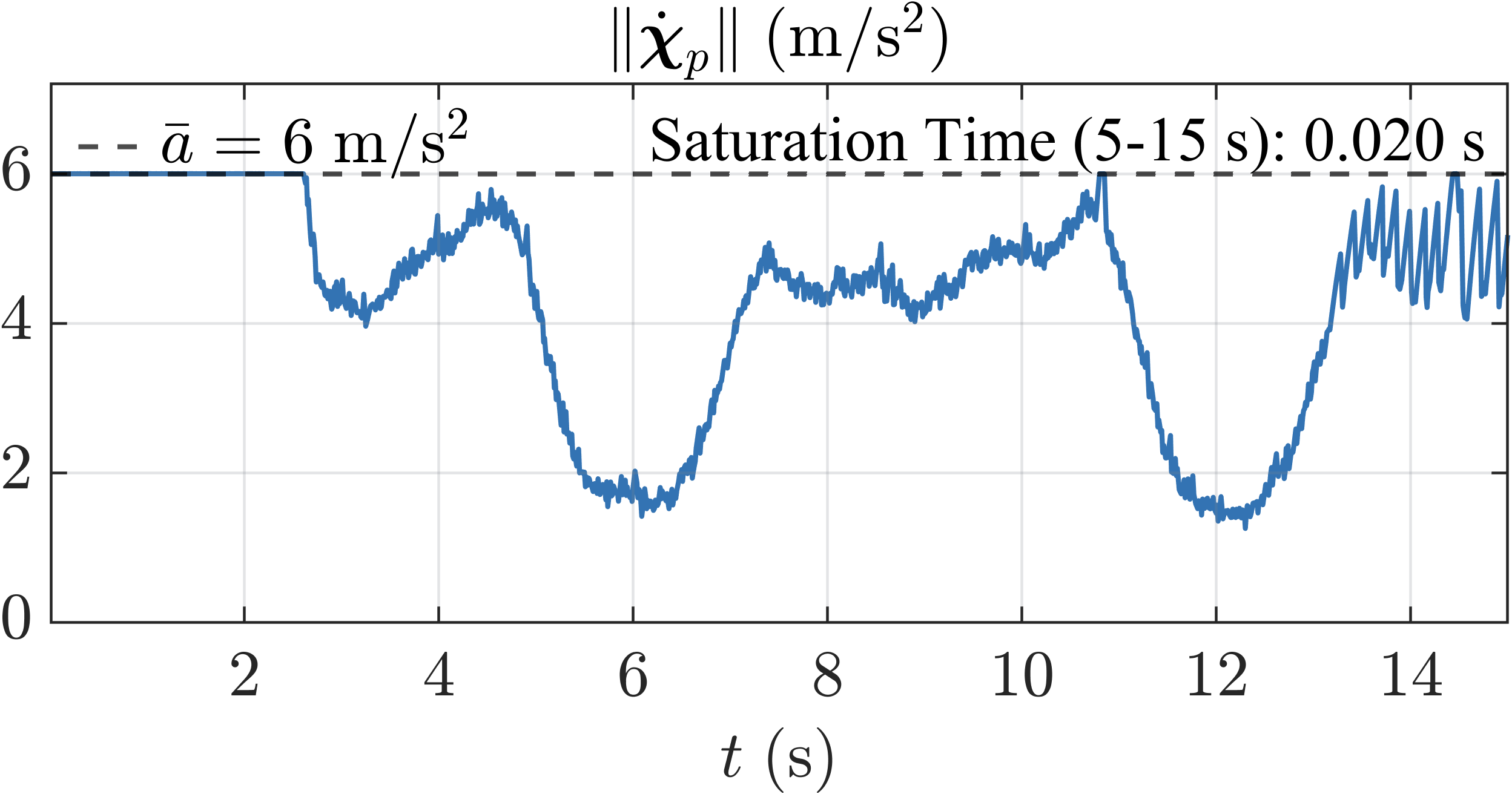}
        \label{fig:008g}
    }
    \caption{Results of the second experiment. (a)–(d) show snapshots at $t=0$, $4$, $9$, and $14\,\mathrm{s}$, respectively, where the gray dashed curve denotes the desired 3D figure “$\infty$” path, and the colored solid curves denote the flight trajectories with the speed magnitude color-coded. (e)–(g) show the path error $\|\boldsymbol{\phi}(t)\|$, the flight speed $\|\boldsymbol{v}(t)\|$ together with PPS $v_d(w)$, and the reference acceleration $\|\boldsymbol{\dot{\chi}}_p\|$, respectively.}
    \label{fig:008}
\end{figure*}

In this experiment, we use a constant PPS for each path and a quadrotor with a takeoff mass of approximately $2.1\,\mathrm{kg}$ to validate the saturated acceleration control algorithm proposed in Theorem~\ref{thm:002}. The quadrotor is equipped with an NxtPX4v2 flight controller running the open-source PX4 firmware (v1.16). The control algorithm runs on a laptop at $100\,\mathrm{Hz}$, and the reference acceleration commands are sent to the quadrotor via Wi-Fi. The pose of the quadrotor is measured by a motion-capture system at $120\,\mathrm{Hz}$, while the experimental data are recorded on the laptop at $100\,\mathrm{Hz}$. We choose the following four curves for path following:
\begin{equation*}
\scalebox{1.0}{$\displaystyle
\begin{aligned}
  \boldsymbol f_1(w)&=
  \begin{bmatrix}
    1.3\cos w-2.6\cos 2w\\
    2\sin w+4\sin 2w\\
    3-2\cos 3w
  \end{bmatrix},
  \boldsymbol f_2(w)=
  \begin{bmatrix}
    2.5\cos 6w\\
    9.5\sin w\\
    3+2\sin 6w
  \end{bmatrix},\\
  \boldsymbol f_3(w)&=
  \begin{bmatrix}
    3.8\sin 2w\\
    9.5\sin w\\
    3+2\cos 2w
  \end{bmatrix},
  \boldsymbol f_4(w)=
  \begin{bmatrix}
    3.9\sin w\\
    4.6\cos w\\
    3+2\sin w
  \end{bmatrix}.
\end{aligned}
$}
\end{equation*}
We set $v_d(w)\equiv v_{d,i}$ with
$v_{d,1}=2.0\,\mathrm{m/s}$,
$v_{d,2}=3.0\,\mathrm{m/s}$,
$v_{d,3}=4.5\,\mathrm{m/s}$, and
$v_{d,4}=5.0\,\mathrm{m/s}$.
The corresponding parameters are chosen as
$v_{s,i}=v_{d,i}+0.25\,\mathrm{m/s}$ and
$\bar v_i=v_{d,i}+0.5\,\mathrm{m/s},i=1,2,3,4$.
The remaining parameters are
$\boldsymbol{K}=2\boldsymbol{I}_3$,
$k_v=3.0$, and
$\bar a=6\,\mathrm{m/s^2}$.
The initial condition for all four experiments is
$\boldsymbol{\xi}(0)=
\begin{bmatrix}
10\,\mathrm{m},0\,\mathrm{m},4\,\mathrm{m},0
\end{bmatrix}^{\top}$.

The flight results are shown in Figs.~\ref{fig:006} and \ref{fig:007}. Although
$\boldsymbol{e}_v^\top\boldsymbol{\dot{\overline{\chi}}}_p\geq0$
is not guaranteed throughout all saturation periods, both
$\|\boldsymbol{\phi}(t)\|$ and $\|\boldsymbol{e}_v(t)\|$
decrease to values close to zero. For a constant PPS, ensuring
$\|\dot{\boldsymbol{\chi}}_p\|\leq\bar a$
along the entire path requires
$v_{d,i}\leq\sqrt{\bar a/\kappa_{\max,i}}$,
where $\kappa_{\max,i}$ is the maximum curvature of the $i$-th path. With
$\bar a=6\,\mathrm{m/s^2}$,
the maximum allowable constant PPSs for
$\boldsymbol f_1$--$\boldsymbol f_4$ are approximately
$2.15$, $3.09$, $4.27$, and $5.01\,\mathrm{m/s}$, respectively. The corresponding experimental values,
$2.0$, $3.0$, $4.5$, and $5.0\,\mathrm{m/s}$,
are close to their respective limits with the PPS for
$\boldsymbol f_3$ slightly exceeding its limit. For $\boldsymbol{f}_3$, reference-acceleration saturation is therefore expected. For the other three paths, reference-acceleration saturation also occurs, as shown in Fig.~\ref{fig:007d}, because unmodeled dynamics and disturbances prevent the conditions
$\|\boldsymbol{\phi}(t)\|=0$ and $\|\boldsymbol{e}_v(t)\|=0$
in Theorem~\ref{thm:003} from being satisfied exactly.

\subsection{ Experiment 2: Path Following with Curvature-Aware PPS}

For the 3D figure “$\infty$” path described by $\boldsymbol f_3$, we have
$L_{\kappa}=\max_{w\in[0,2\pi]} \frac{|\kappa'(w)|}{\|\boldsymbol{f}'_3(w)\|} \approx 0.0676\,\mathrm{m}^{-2}$. We further employ the curvature-aware PPS in \eqref{eq:adaptive_speed} designed in \cref{thm:003}. With the maximum allowable acceleration set to $\bar a=6\,\mathrm{m/s^2}$, the largest cruising speed is
$v_{c,\max}=\left(3\sqrt{3}\bar{a}^2/2L_{\kappa}\right)^{1/4}\approx
6.10\,\mathrm{m/s}$. We choose
$v_c=6\,\mathrm{m/s}$,
$v_s=6.25\,\mathrm{m/s}$, and
$\bar v=6.5\,\mathrm{m/s}$,
while the remaining parameters and the initial condition are the same as those in Section~\ref{sec:exp1}. Here, we focus on the behavior of the quadrotor after it approaches the desired path.

The results are shown in Fig.~\ref{fig:008}. The quadrotor converges to the desired path while adapting its speed to the path curvature. As $\kappa$ increases, $v_d$ and consequently $\|\boldsymbol v\|$ decrease, reducing the required normal acceleration $\kappa\|\boldsymbol v\|^2$. Therefore, reference-acceleration saturation mainly occurs during the initial transient and disappears as the path and velocity-tracking errors decrease. However, motor-response delays, measurement noise, and aerodynamic drag prevent these errors from reaching exactly zero, resulting in a total saturation time of $0.020\,\mathrm{s}$. 

Compared with the constant PPS over $5$--$15\,\mathrm{s}$, the curvature-aware PPS increases the average physical speed by $6.5\%$ while reducing the average path error and acceleration-saturation time by $34.6\%$ and $98.3\%$, respectively, as summarized in Table~\ref{tab:exp_comparison}. These results demonstrate that the curvature-aware PPS improves both path-following accuracy and average motion speed while substantially reducing acceleration saturation.

\begin{table}[h]
    \centering
    \caption{Comparison of two experiments over 5--15\,s for the 3D figure “$\infty$” path.}
    \label{tab:exp_comparison}
    \setlength{\tabcolsep}{5.0pt}
    \renewcommand{\arraystretch}{1.15}
    \begin{tabular}{lccc}
        \toprule
        \textbf{Metric}
        & \textbf{Exp. 1}
        & \textbf{Exp. 2}
        & \textbf{Relative Change} \\
        \midrule
        
        Mean path error (m)
        & 0.205
        & 0.134
        & $\downarrow 34.6\%$ \\
        
        Saturation time (s)
        & 1.155
        & 0.020
        & $\downarrow 98.3\%$ \\
        
        Mean speed (m/s)
        & 4.406
        & 4.693
        & $\uparrow 6.5\%$ \\
        
        \bottomrule
    \end{tabular}
\end{table}

\section{Conclusion}
In this paper, we propose a singularity-free guiding vector field with prescribed physical speed to decouple the robot’s physical speed from the path parameterization. The proposed vector field is globally well-defined and free of off-path singularities. It guarantees global path convergence and physical-speed regulation while yielding path-error dynamics invariant under path reparameterizations. For systems with second-order dynamics subject to acceleration constraints, we further design a saturated controller and a curvature-aware PPS that
maintain the reference acceleration within the prescribed bound under
exact on-path velocity tracking. Comparative simulations and
quadrotor experiments validate the effectiveness of the proposed approach. We have also extended the framework to multi-robot coordinated path following with distributed coordination and collision avoidance, and these results will be reported in future work.

\begin{appendices}
		
	\end{appendices}

	\bibliographystyle{IEEEtran}
	\bibliography{references_corrected.bib}
\end{document}